\documentclass[twoside]{article}

% \usepackage{aistats2021}
% If your paper is accepted, change the options for the package
% aistats2021 as follows:
%
\usepackage[accepted]{aistats2021}
%
% This option will print headings for the title of your paper and
% headings for the authors names, plus a copyright note at the end of
% the first column of the first page.

% If you set papersize explicitly, activate the following three lines:

\setlength{\pdfpageheight}{11in}
\setlength{\pdfpagewidth}{8.5in}

% If you use natbib package, activate the following three lines:
\usepackage[round]{natbib}

\usepackage[utf8]{inputenc} % allow utf-8 input
\usepackage[T1]{fontenc}    % use 8-bit T1 fonts
\usepackage{hyperref}       % hyperlinks
\usepackage{url}            % simple URL typesetting
\usepackage{booktabs}       % professional-quality tables
\usepackage{amsfonts}       % blackboard math symbols
\usepackage{nicefrac}       % compact symbols for 1/2, etc.
\usepackage{microtype}      % microtypography
\usepackage{graphicx}
\usepackage{subfigure}
\usepackage{booktabs} % for professional tables
\usepackage{amsmath,amsfonts,amssymb,amsthm}
\usepackage{dsfont}
\usepackage{xcolor}
\usepackage{mathrsfs}
\usepackage{centernot}
\usepackage{mathtools}
\usepackage{mathrsfs}
\usepackage{breakcites}
\usepackage{wrapfig}

\usepackage[ruled,vlined]{algorithm2e}
% \DeclareMathOperator*{\argmax}{\arg\max} % Jan Hlavacek
   % ASdeL

\newtheorem{theorem}{Theorem}
\newtheorem{prop}{Proposition}
\newtheorem{lemma}{Lemma}

% If you use BibTeX in apalike style, activate the following line:
\bibliographystyle{apalike}

\begin{document}

% If your paper is accepted and the title of your paper is very long,
% the style will print as headings an error message. Use the following
% command to supply a shorter title of your paper so that it can be
% used as headings.
%
%\runningtitle{I use this title instead because the last one was very long}

% If your paper is accepted and the number of authors is large, the
% style will print as headings an error message. Use the following
% command to supply a shorter version of the authors names so that
% they can be used as headings (for example, use only the surnames)
%
%\runningauthor{Surname 1, Surname 2, Surname 3, ...., Surname n}

\twocolumn[
\aistatstitle{Deep Fourier Kernel for Self-Attentive Point Processes}
\aistatsauthor{Shixiang Zhu \And Minghe Zhang \And Ruyi Ding \And Yao Xie}
\aistatsaddress{Georgia Institute of Technology} ]

\begin{abstract}
We present a novel attention-based model for discrete event data to capture complex non-linear temporal dependence structures. We borrow the idea from the attention mechanism and incorporate it into the point processes' conditional intensity function. We further introduce a novel score function using Fourier kernel embedding, whose spectrum is represented using neural networks, which drastically differs from the traditional dot-product kernel and can capture a more complex similarity structure. We establish our approach's theoretical properties and demonstrate our approach's competitive performance compared to the state-of-the-art for synthetic and real data.
\end{abstract}

\vspace{-0.1in}
\section{Introduction}
\label{sec:introduction}
\vspace{-0.1in}

Discrete event data are ubiquitous in modern applications, ranging from traffic incidents, police incidents, user behaviors in social networks, and earthquake catalogs. Such data consist of a sequence of events that indicate when and where each event occurred and any additional descriptive information about the event (such as category, marks, and free-text). The distribution of events is of scientific and practical interest, both for prediction purposes and for inferring these events' underlying generative mechanism. 

A popular framework for modeling discrete events is point processes, which can be continuous over time and space. Multi-dimensional point processes can be used to model discrete events over networks. An important aspect of this model is to capture the triggering or inhibiting effect of the event on subsequent events in the future.  Since the distribution of point processes is completely specified by the conditional intensity function (the occurrence rate of events conditioning on their history), such triggering effect can be conveniently modeled by assuming parametric forms. In the classical statistical framework, the conditional intensity function usually consists of a deterministic background rate plus a stochastic term that includes the influence of the historical events, characterized by a triggering kernel function. For example, the seminar work \citep{Ogata1998} proposed the epidemic-type aftershock sequence (ETAS), which suggests an exponentially decaying function over the temporal and spatial distance between events. However, with the increasing complexity and quantity of modern data, we need more expressive models. Recently, there has been much effort in developing neural network-based point processes, leveraging the rich representation power of neural networks \citep{Omi2019, Zhu2019B}. In particular, because of the sequential nature of event data, existing methods rely heavily on Recurrent Neural Networks (RNNs) \citep{Du2016, Li2018, Mei2017, Upadhyay2018, Xiao2017B, Xiao2017A, Zhu2020}.

However, there is a notable limitation of existing neural network-based models. 
The popular RNN models such as Long Short-Term Memory (LSTM) \citep{Hochreiter1997} are not enough capable of capturing long-range dependencies and still implicitly assumes that the influence of the current event decays monotonically over time (due to their recursive structure).   
Many real-world applications may not be good candidates to apply these assumptions. For instance, in modeling economic time-series, major economic or historical events (such as economic crisis or shift of policy) will have a much longer impact; their influence may be carried over to current time and should not be ``forgotten'' by the event model. In modeling traffic events, when a major car accident occurs on the highway, it takes hours to clear the scene. The congestion will not ease during that period -- the influence of major traffic incident events may not decay monotonically over time. These motivate us to tackle the long-term and non-homogeneous influence function and capture the influence of past events in a more flexible manner. 

\begin{figure}[!t]
% \vspace{-0.1in}
\begin{center}
  \subfigure[RNN-based Point Process]{\includegraphics[width=.75\linewidth]{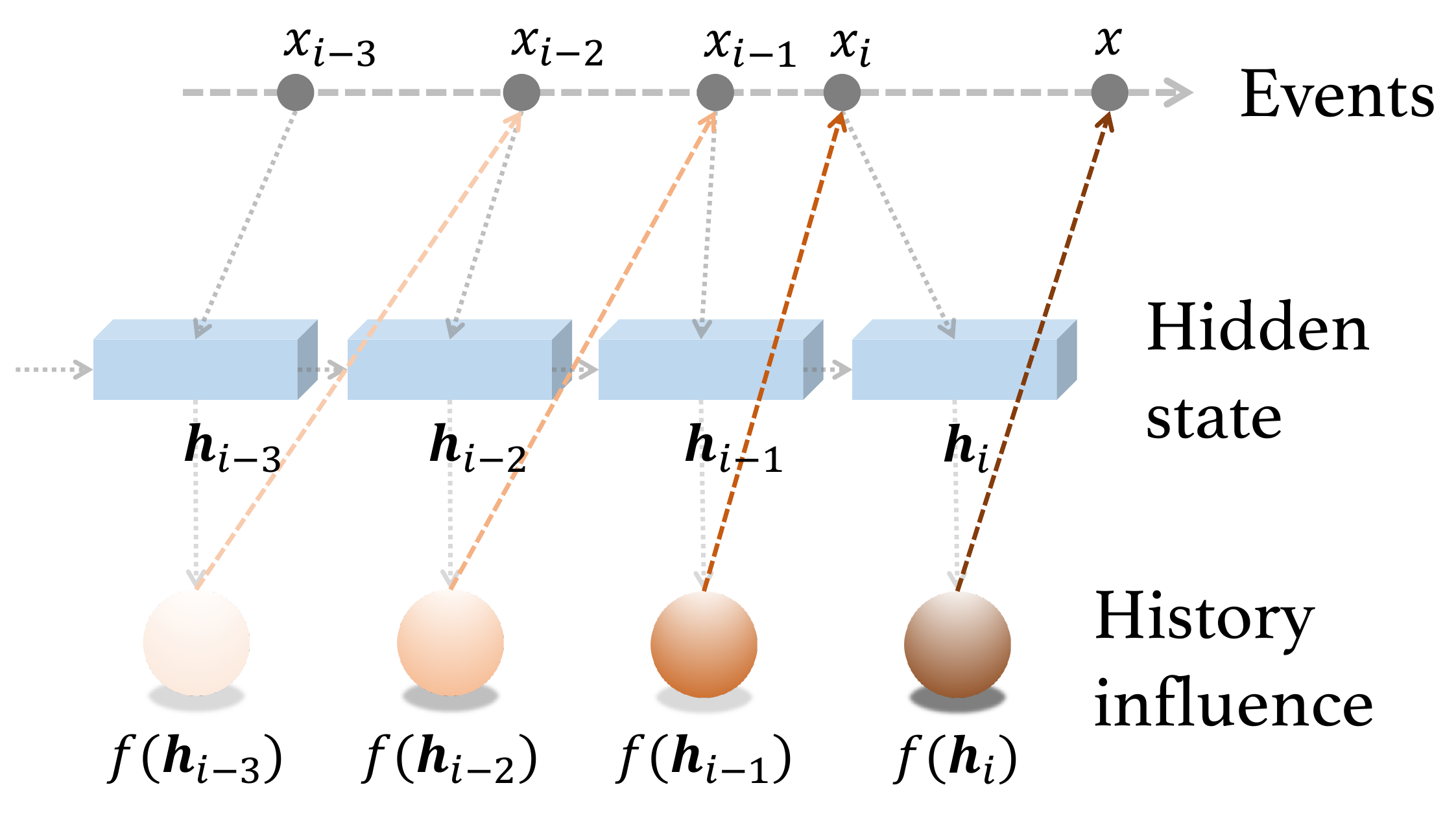}}
%   \hspace{.2in}
\vfill
  \subfigure[Deep Attention Point Process]{\includegraphics[width=.75\linewidth]{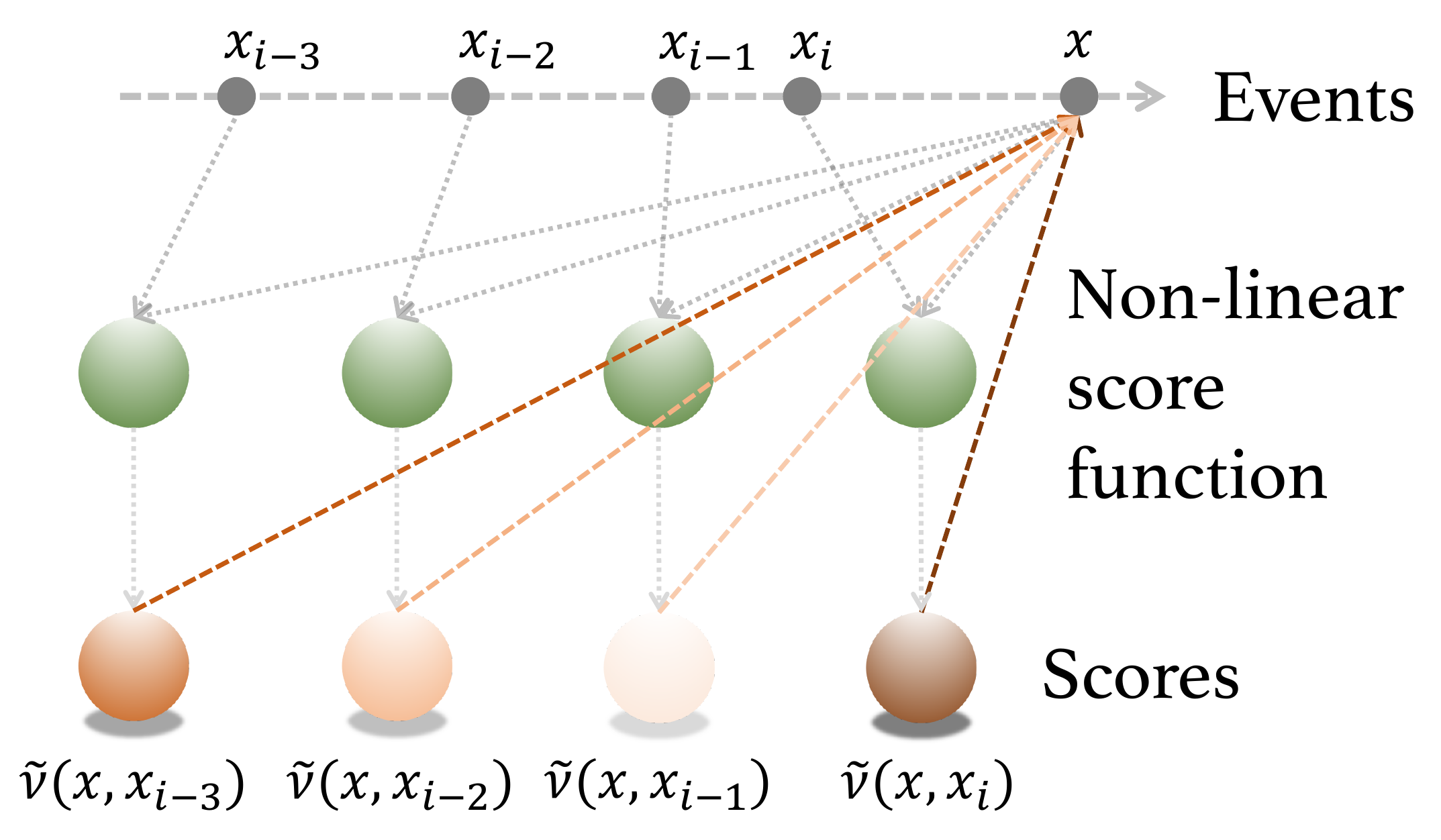}}
  \caption{\small{Comparison between RNN-based models and our DAPP. The color depth of the red balls represent their ``importance'' in the model. The history influence in (a) are exponentially decaying over the time. The score is a non-linear function with respect to the distance between events and is non-homogeneous over the time.}}
  \label{fig:rnn-attention-comparison}
\end{center}
\vspace{-0.1in}
\end{figure}

In the domain of natural language processing (NLP) and computer vision, the self-attention mechanism has been widely adopted as an algorithmic component to tackle the effect of non-linear and long-range dependence \citep{Vaswani2017}. This motivates us to adapt the attention mechanism for the point processes models, leveraging their capabilities to capture long-range and complex dependency in the sequence. However, since the attention mechanism has rarely been used outside of the domains mentioned above, we still need to develop a principled probabilistic (stochastic process) model framework to incorporate the attention mechanism into continuous point processes properly. 
In particular, unlike the NLP problem \citep{Mikolov2013}, where the similarity between words can be adequately characterized by \emph{dot-product} score \citep{Vaswani2017} in the conventional attention mechanism, discrete events usually exhibit heterogeneous triggering effects regarding their spatio-temporal distances. 
Take earthquake catalog data as an example. 
The dynamics between seismic events are related to the geologic structure of faults.
For instance, most aftershocks either occur along the fault plane or other faults within the volume affected by the strain associated with the mainshock.

In this paper, we propose a deep attention point process (DAPP) model, with a flexible non-linear score function based on Fourier kernels in the attention mechanism, as shown in Figure~\ref{fig:rnn-attention-comparison}. We go beyond the recurrent structure of RNN that the historical information can only be passed through the hidden state. Instead, we leverage the attention mechanism to develop a flexible framework that ``focuses'' on past events with high ``importance'' scores, regardless of how far away they are. We also present a novel score function via Fourier kernels with spectrum represented using deep neural networks, whose parameters are learned from data. In contrast to the commonly used dot-product score, which essentially performs linear key embedding, our score function performs non-linear kernel-induced feature embedding, capturing more complex similarity structures in events. This can help achieve higher flexibility in retaining the most ``significant'' historical events relative to the current event. Moreover, to achieve constant memory in the face of streaming data, we develop an online version of DAPP, which is more suitable to process streaming data. We establish the Fourier kernel's theoretical properties and demonstrate the competitive performance of our proposed method relative to the state-of-the-art on a wide range of real and synthetic data sets. 

Our contributions include (1) introducing a general probabilistic attention-based point process model for discrete event data; (2) introducing a novel similarity kernel based on Fourier kernel embedding and neural-network represented spectrum (in contrast to the standard dot-product kernel). 

\vspace{-0.1in}
\paragraph{Related work.}
Existing works for point processes modeling, such as \cite{Gomez2010, Yuan2019, Zhu2019A}, often assuming parametric forms of the intensity functions. Such methods enjoy good interpretability and are efficient to perform. However, parametric models are not expressive enough to capture the events' dynamics in some applications. 
As previously mentioned, recent interest has focused on improving the expressive power of point process models using RNNs \citep{Du2016, Li2018, Mei2017, Upadhyay2018, Zhu2020}. 
% Xiao2017B, Xiao2017A, 
However, the events' dependence in the conditional intensity is specified as a parametric form. For instance, \cite{Du2016} expresses the influence of two consecutive events in a form of $\exp \{ w (t_{i+1} - t_{i})\}$, which is an exponential function with respect to the length of the time interval.

% There are also other works \cite{Li2018, Xiao2017B} using RNNs to model the events' dependencies without specifying the conditional intensity function explicitly. These works only use RNN as a generative model where the conditional function is not available. These works focus on studying different learning strategies since maximum likelihood estimation is not applicable here. 
% Another recent work \cite{Omi2019} has aimed at looking for a more general way to model point processes, where no parametric form is assumed. It uses a neural network to parameterize the hazard function, where the conditional intensity can be further derived by taking the derivative of the hazard function. This approach is highly flexible and easy to compute since no numerical integral calculation is involved. However, the model is only specified using the neural network, which lacks interpretability. In addition, this model only works for temporal events. 

There also have been some works that model stochastic processes using the attention mechanism \citep{Zhang2019, Kim2019}. \cite{Kim2019} uses self-attention to model a class of neural latent variable models, called Neural Processes \citep{Garnelo2018}, which is not for sequential data specifically.
In retrospect, we realize a concurrent work \citep{Zhang2019, zuo2020transformer} which also use the attentive mechanism to model point processes, but the framework is different. An important distinction of their approach from ours is that they rely on a dot-product between features (embedded in a Gaussian argument) in the attention mechanism; we use a more flexible and general Fourier kernel for similarity function, and we also specifically address the design and learning of the kernel by representing the spectrum of the Fourier kernel using neural networks. 

\vspace{-0.1in}
\section{Background} 
\label{sec:point-process}
\vspace{-0.1in}

% In this section, we revisit the basic definitions of marked spatio-temporal point processes and attention mechanism. 

% \subsection{Spatio-temporal point processes}
% \label{sec:point-processes}

Marked temporal point processes (MTPPs) \citep{Reinhart2017} consist of an ordered sequence of events localized in time, location, and mark spaces. Let $\{x_1 , x_2 , \dots, x_{N_T}\}$ represent a sequence of points sampled from a MTPP. We denote $N_T$ as the number of the points generated in the time horizon $[0, T)$. Each point $x_i$ is a marked spatio-temporal tuple $x_i = (t_i, m_i)$, where $t_i \in [0, T)$ is the time of occurrence of the $i$-th event, and $m_i \in \mathcal{M}$ is the corresponding mark, which may contain location, event type, or other rich description information (such as image or free-text). Here we treat discrete location marks, while sometimes the continuous location is treated separately in spatio-temporal point processes. 

The events' distribution in MTPPs are characterized via a conditional intensity function $\lambda(t, m|\mathcal{H}_t)$, which is the probability of observing an event in the marked temporal space $[0, T) \times \mathcal{M}$ given the events' history $\mathcal{H}_t = \{ (t_i, m_i)|t_i < t \}$, i.e.,
$
    \lambda(t, m | \mathcal{H}_t) |B(m, dm)| dt
    = \mathbb{E}\left[ N([t, t+dt) \times B(m, dm)) | \mathcal{H}_t \right]
   % \label{eq:def-conditional-intensity}
$,
where $N(A)$ is the counting measure of events over the set $A \subseteq \mathcal{X}$ and $|B(m, dm)|$ is the Lebesgue measure of the ball $B(m, dm)$ with radius $dm$.
The log-likelihood of observing a sequence with $n$ events denoted as $\boldsymbol{x} = \{(t_i, m_i)\}_{i=1}^{N_T}$ can be obtained by 
% (see Appendix~\ref{append:Likelihood} for the derivation):
\begin{equation}
    \ell(\boldsymbol{x}) =  \sum_{i=1}^{N_T} \log \lambda(t_i, m_i | \mathcal{H}_{t_{i}}) - \int_{m \in \mathcal{M}} \int_0^T \lambda(t, m | \mathcal{H}_{t}) dt dm.
    \label{eq:log-likelihood}
\end{equation}

As self- and mutual-exciting point processes, Hawkes processes \citep{Hawkes1971} have been widely used to capture the mutual excitation dynamics among temporal events. The model assumes that influences from past events are linearly additive towards the current event. The conditional intensity function of a Hawkes process is defined as 
\begin{equation}
    \lambda(t, m | \mathcal{H}_{t}) = \mu + \sum_{t_i < t} g(t - t_i, m - m_i),
    \label{eq:hawkes}
\end{equation}
where $\mu \ge 0$ is the background intensity of events, $g(\cdot) \ge 0$ is the triggering function that captures spatio-temporal and marked dependencies of the past events. The triggering function can be chosen in advance, e.g., in one-dimensional cases, $g(t,t_i) = \alpha \exp \{ - \beta (t - t_i) \}$, where $\beta$ controls the decay rate and $\alpha >0$ controls the magnitude of the influence. 

%\vspace{-0.1in}
\section{Proposed Method}
\label{sec:methods}
\vspace{-0.1in}

This section presents a novel attention-based point process model using the deep Fourier kernel as its score function, which is capable of remembering long-term memory and capturing non-homogeneous triggering effects.

\vspace{-0.1in}
\subsection{Self-attention in point processes}
\label{sec:attention-pp}
\vspace{-0.1in}

% \begin{figure}[!t]
% \centering
% \includegraphics[width=.6\linewidth]{}
% \caption{The multi-head attention architecture.}
% \label{fig:multi-head-attention}
% \end{figure}

Deep Attention Point Processes (DAPP) aims to model the current event's nonlinear dependencies from past events using the attention mechanism. 
Specifically, we model the conditional intensity function of MTPPs using the attention output. 
DAPP also adopts the `` multi-heads'' mechanism, which offers multiple ``representation subspaces'' for events in the sequence. We describe the DAPP framework for point processes as follows. 

For notational simplicity, we denote the $d$-dimensional marked temporal sapce as $\mathcal{X} \coloneqq [0, T) \times \mathcal{M} \subset \mathbb{R}^d$. Let data tuple of the current event be $x \coloneqq (t, m) \in \mathcal{X}$, and the data tuple of an arbitrary past event be $x^\prime \coloneqq (t^\prime, m^\prime) \in \mathcal{X}$ for any $t^\prime < t$. 
For the $k$-th \emph{attention head}, 
we first score the current event against its past event 
% using their key embeddings $\phi^{(k)}(x_n) \in \mathbb{R}^r$ and $\phi^{(k)}(x_i) \in \mathbb{R}^r$ ($r$ is the dimension of the key embedding space),
using score function $\nu^{(k)}: \mathcal{X} \times \mathcal{X} \rightarrow \mathbb{R}^+$. 
For the event $x$, the score $\nu^{(k)}(x, x^\prime)$ determines how much \emph{attention} to place on the past event $x^\prime$ as we encode the history information.
More details about the score formulation will be presented in Section~\ref{sec:score}.
The normalized score $\widetilde\nu^{(k)}(x, x^\prime) \in [0, 1]$ for the event $x$ and $x^\prime$ is obtained by employing the softmax function over the score, which is defined as
\begin{equation}
    \widetilde\nu^{(k)}(x, x^\prime) = \frac{
    \nu^{(k)}(x, x^\prime)}{
    \sum_{t_i < t} \nu^{(k)}(x, x_i)},~k = 1,\dots, K,
    \label{eq:score}
\end{equation}   
Then we map past events to the \emph{value} embedding space via $\varphi^{(k)}: \mathcal{X} \rightarrow \mathbb{R}^p$, where $p$ is the dimension of the value embedding.
Here the value embedding is a linear transformation of the event's data tuple, i.e., $\varphi^{(k)}(x) = W_v^{(k)} x \in \mathbb{R}^p$, where $W_v^{(k)} \in \mathbb{R}^{p \times d}$ is the weight matrix. 
Therefore, the $k$-th attention head $\boldsymbol{h}^{(k)}(x) \in \mathbb{R}^p$ for the event $x$ can be obtained by multiplying each value embedding by the score and adding them up, which is formally defined as
\begin{equation}
    h^{(k)}(x) = \sum_{t_i < t} \widetilde\nu^{(k)}(x, x_i) \varphi^{(k)}(x_i),~k = 1,\dots, K,
    \label{eq:attention}
\end{equation} 
Note that events $x, x_i$ are analogous to the \emph{query} and the $i$-th \emph{key}, the embedding of the $i$-th event $\varphi^{(k)}(x_i)$ is analogous to the \emph{value} in the attention mechanism. 
The multi-head attention $\boldsymbol{h}(x) \in \mathbb{R}^{Kp}$ is the concatenation of $K$ single attention heads:
\[
    \boldsymbol{h}(x) = \text{concat}\left(h^{(1)}(x), \dots, h^{(K)}(x)\right).
\]
We highlight that the attention $\boldsymbol{h}(x)$ is able to ``emphasize'' (or ``de-emphasize'') events, which are most (or least) influential in their future, by directly assigning them larger (smaller) scores. 
In comparison, RNN-based models pass the history information sequentially via a hidden state, where the recent memory will override the long-term memory. This has led RNNs to ``overemphasize'' the recent events and fail to capture the events' influences in the distant past.  

Follow the similar idea of \cite{Mei2017}, we consider a non-linear transformation of the multi-head attention $\boldsymbol{h}(x)$ as the historical information before event $x$, the conditional intensity function $\lambda$ can be specified as:
% \begin{equation}
%     \lambda(x | \mathcal{H}_{t}) \approx \lambda(x | \boldsymbol{h}(x)) = \underbrace{\mu(x)}_{\text{base intensity}} + \underbrace{\text{softplus}\left(\boldsymbol{h}(x)^\top W + b\right)}_{\text{triggering effect}},
%     \label{eq:lambda}
% \end{equation}
\begin{equation}
    % \lambda(x | \mathcal{H}_{t}) \approx 
    \lambda(x | \boldsymbol{h}(x)) = \underbrace{\mu(x)}_{\text{base intensity}} + \underbrace{g\left(\boldsymbol{h}(x)^\top W + b\right)}_{\text{triggering effect}},
    \label{eq:lambda}
\end{equation}
where $W \in \mathbb{R}^{Kp}, b \in \mathbb{R}$ are the weight matrix and the bias term, where $g: \mathbb R \rightarrow \mathbb R^+$ is a monotonically increasing function, and here we choose the function $g(x):= \text{softplus}(x) = \log(1 + e^x) > 0$ is a smooth approximation of the ReLU function, which ensures the intensity strictly positive at all times when an event could possibly occur and avoid infinitely bad log-likelihood. The $\mu(x) > 0$ is the base intensity, which can be estimated from the data.

\vspace{-0.1in}
\subsection{Score function via deep Fourier kernel}
\label{sec:score}
\vspace{-0.1in}

As introduced in the previous section, the score function $\nu$ directly quantifies how likely one event is triggered by the other in a sequence, which plays a similar role as the triggering function in Hawkes processes defined in \eqref{eq:hawkes}. 
% 
% A large body of previous works in point processes suggest a specific parametric form for the triggering function with respect to the distance of two events, i.e, $g(t-t_i),~t > t_i$, to describe how the influence of a past event on the current event evolves over their temporal distance. 
%
Usually, the \emph{dot-product score} has been widely used in most attention models.
Specifically, two points $x, x' \in \mathbb{R}^d$ are first projected onto another space via $W_u x $ and $W_u x'$ (the so-called \emph{key embeddings}), where $W_u \in \mathbb{R}^{r \times d}$ is a linear mapping and $r$ is the dimension of key embeddings. 
Then the score is obtained by computing 
$
x^\top W_u^\top W_u x',
$
which essentially is their inner product in the embedding space.
However, for some real applications, inner product or Euclidean distance may be limited when the triggering effects between events are non-homogeneous. 

\begin{figure}[!h]
\vspace{-0.1in}
\centering
\includegraphics[width=.95\linewidth]{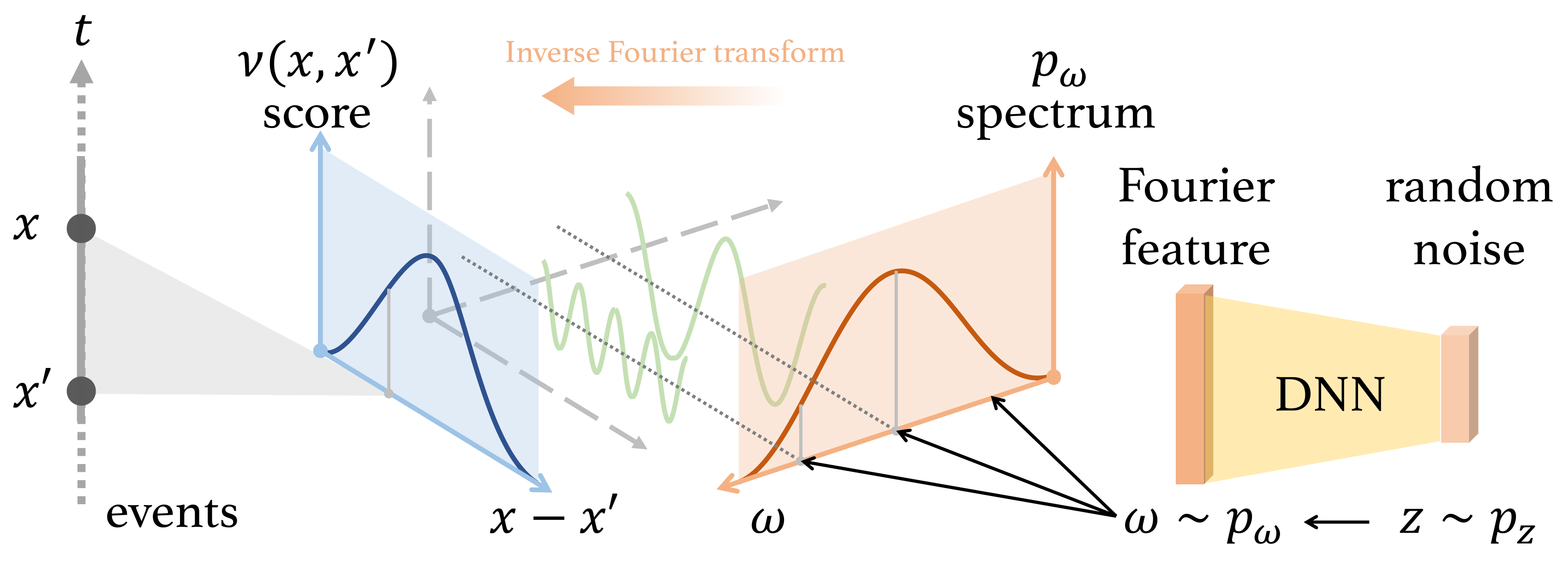}
\caption{\small{An illustration for the Fourier kernel score. The score is computed via an inverse Fourier transform, where the distribution of Fourier features is represented by a deep neural network.}}
\label{fig:score-illustration}
\vspace{-0.1in}
\end{figure}

We aim to find a more representative score function without specifying any parametric form to capture the complex interactions between events, which goes beyond the linear assumption on the dot-product score. 
To this end, we propose a novel deep Fourier kernel as the score function in the attention mechanism, where the key embedding $W_u x$ is substituted with the kernel-induced feature mapping $\Phi(x)$.  
As shown in Figure~\ref{fig:score-illustration}, these feature mappings are randomly sampled from an optimal high-dimensional power spectrum. 
The optimal spectrum (the distribution of power) is represented by a deep neural network. The network's inputs are random normal noises, and the outputs are Fourier features sampled from the optimal spectrum. 

Formally, this score formulation relies on Bochner’s Theorem \citep{rudin1962fourier}, which states that any bounded, continuous and shift-invariant kernel is a Fourier transform of a bounded non-negative measure:
\begin{theorem}[Bochner \citep{rudin1962fourier}]
\label{thm1}
A continuous kernel of the form $\nu(x,x^\prime)=\kappa(x-x^\prime)$ defined over a locally compact set $\mathcal{X} \subset \mathbb{R}^d$ is positive definite if and only if $g$ is the Fourier transform of a non-negative measure:
\begin{equation}
    \nu(x, x^\prime) = \kappa(x - x^\prime) =\int_{\Omega}p(\omega)e^{jw^\top (x - x^\prime)}d\omega,
    \label{eq:score-def-1}
\end{equation}
where $p$ is a non-negative measure, $\Omega$ is the Fourier feature space, and kernels of the form $\nu(x,x^\prime)$ are called shift-invariant kernel. 
\end{theorem}
If a shift-invariant kernel $\kappa(\cdot)$ is properly scaled such that $\kappa(0) = 1$, Bochner’s theorem guarantees that its Fourier transform $p(\omega)$ is a proper probability distribution. 

Suppose an optimal spectrum that best describes how the ``energy'' of events' interaction in each attention head is distributed with Fourier features. 
Here we assume $p_\omega^{(k)}$ is the optimal distribution of Fourier features $\omega \in \Omega \subset \mathbb{R}^r$ in the $k$-th attention head, where $r$ is dimension of Fourier features. 
We also substitute $\exp\{jw^\top (x - x^\prime)\}$ with a real-valued feature mapping, such that the probability distribution $p_\omega$ and the kernel $\nu$ are real \citep{Rahimi2008}. 
We, therefore, obtain a score formulation of the $k$-th attention head in \eqref{eq:score} between two events $x, x^\prime \in \mathcal{X} \subset \mathbb{R}^d$ that satisfies these conditions as the following proposition (see proof in Appendix~\ref{append:proof-prop-1}):
\begin{prop}[Score function via Fourier kernel embedding]
\label{prop:score-reformulation}
Let the score $\nu^{(k)},~k=1,\dots,K$ be a continuous real-valued shift-invariant kernel and $p_\omega^{(k)}$ be a probability distribution, we have the following definition:
\begin{equation}
    \nu^{(k)}(x, x^\prime) \coloneqq \mathbb{E}
    % _{\omega \sim p_\omega^{(k)}, b_u \sim U(0 ,2\pi)} 
    \big [ \phi^{(k)}_{\omega}(x) \cdot \phi^{(k)}_{\omega}(x^\prime) \big ],
    \label{eq:score-def-2}
\end{equation}
where $\phi^{(k)}_{\omega}(x) \coloneqq \sqrt{2} \cos(\omega^\top W_u^{(k)} x + b_u),$ and $W_u^{(k)} \in \mathbb{R}^{r \times d}$ is a linear mapping.
These Fourier features $\omega \in \Omega \subset \mathbb{R}^r$ are sampled from $p_{\omega}^{(k)}$ and $b_u$ is drawn uniformly from $[0, 2\pi]$.
\end{prop}
We can conclude from the proposition that (1) the score function is defined by the optimal spectrum $p^{(k)}_\omega$ and the weight $W_u^{(k)}$.  
Here $W_u^{(k)} x$ resembles the \emph{key embedding} in the dot-product score, 
which projects event $x$ to a high-dimensional embedding space;
(2) this representation enables us to conveniently estimate the score from samples, i.e.,
\begin{equation}
    \nu^{(k)}(x, x') \approx \frac{1}{D} \sum_{j=1}^D \phi^{(k)}_{\omega_j}(x) \cdot \phi^{(k)}_{\omega_j}(x^\prime) = \Phi^{(k)}(x)^\top \Phi^{(k)}(x^\prime),
    \label{eq:score-def-3}
\end{equation}
where $\omega_j,j=1,\dots,D$ are $D$ Fourier features sampled from the distribution $p_\omega^{(k)}$.
The vector 
\[
\Phi^{(k)}(x) \coloneqq [ \phi^{(k)}_{\omega_1}(x), \dots, \phi^{(k)}_{\omega_D}(x) ]^\top,
\]
can be viewed as the approximation of the kernel-induced feature mapping for the score function. 

In the following proposition, we will show this empirical estimation converges uniformly over a compact domain $\mathcal{X}$ as $D$ grows and is a lower variance approximation to \eqref{eq:score-def-2} (see the proof in Appendix~\ref{append:proof-prop-2}):
\begin{prop}[Concentration of empirical scores]
\label{prop:score-convergence}
Assume $\sigma_p^2 = \mathbb{E}_{\omega \sim p^{(k)}_\omega} [\omega^\top \omega] < \infty$ and $\mathcal{X} \subset \mathbb{R}^d$. Let $R$ denote the radius of the Euclidean ball containing $\mathcal{X}$, then for the kernel-induced feature mapping $\Phi^{(k)}$ defined in \eqref{eq:score-def-3}, we have
\begin{equation}
\begin{aligned}
    &\mathbb{P}\left\{\underset{x, x^\prime \in \mathcal{X}}{\sup} \left | \Phi^{(k)}(x)^\top \Phi^{(k)}(x^\prime) - \nu^{(k)}(x, x^\prime) \right | \ge \epsilon \right\} \\
    \le & \left(\frac{48 R \sigma_p}{\epsilon}\right)^2\exp\left\{ - \frac{D \epsilon^2}{4(d+2)} \right\}.
    \label{eq:convergence}
\end{aligned}
\end{equation}
\end{prop}
The proposition guarantees that a good estimate of the score function can be found, with high probability, by sampling a finite number of Fourier features. In particular, for an absolute error of at most $\epsilon$, the number of samples needed is on the order of $D = O(d\log(R\sigma_p/\epsilon) / \epsilon^2)$, which grows linearly as data dimension $d$ increases. 
% As demonstrated in Figure~\ref{fig:fourier-feature-vs-lambda}, Appendix~\ref{append:fourier-feature}, we have shown that empirically larger number of Fourier features $D$ will ``smooth'' the recovered intensity.

\vspace{-0.1in}
\subsection{Fourier feature generator} 
\vspace{-0.1in}

To represent the distribution $p_{\omega}^{(k)}$ over Fourier feature $\omega$, we define a prior (generator) on an input noise variable $z \sim p_z$, then represent a mapping to feature space as $G: \mathbb{R}^q \rightarrow \mathbb{R}^r$ as shown in Figure~\ref{fig:score-illustration}, where $G$ is a differentiable function characterized by a deep neural network with parameters $\theta^{(k)}$ and $q$ is the dimension of the noise, such that roughly speaking the distribution functions are the same $p_{\omega}^{(k)}\approx G(z)$.
Note that the score function's representative power is jointly decided by the generator's parameters and the weight matrix of the key embedding. 

Figure~\ref{fig:score-transformation} gives an intuitive example of representing the intensity of events using our DAPP with two attention heads ($K=2$).
Here, we choose $q = r = 2$ to visualize the noise prior and the optimal spectrums in a 2D space for ease of presentation. The optimal spectrum learned from data in each attention head uniquely specifies a score function, which is capable of capturing various types of non-linear triggering effects. Unlike Hawkes processes, underlying long-term influences of some events, in this case, can be preserved in the intensity function.
As shown in Figure~\ref{fig:score-visualization}, we present two examples of pairwise scores calculated by the proposed Fourier score and dot-product score under the same architecture, respectively, which enables a visual comparison. To make these two methods comparable, we trained two models using the same synthetic data set, and its exact triggering function is also provided as the ``ground truth''. This ablation study confirms that our Fourier score in a single attention head is expressive enough to accurately capture the triggering effects.

\begin{figure}
% \vspace{-0.1in}
\centering
\includegraphics[width=1.\linewidth]{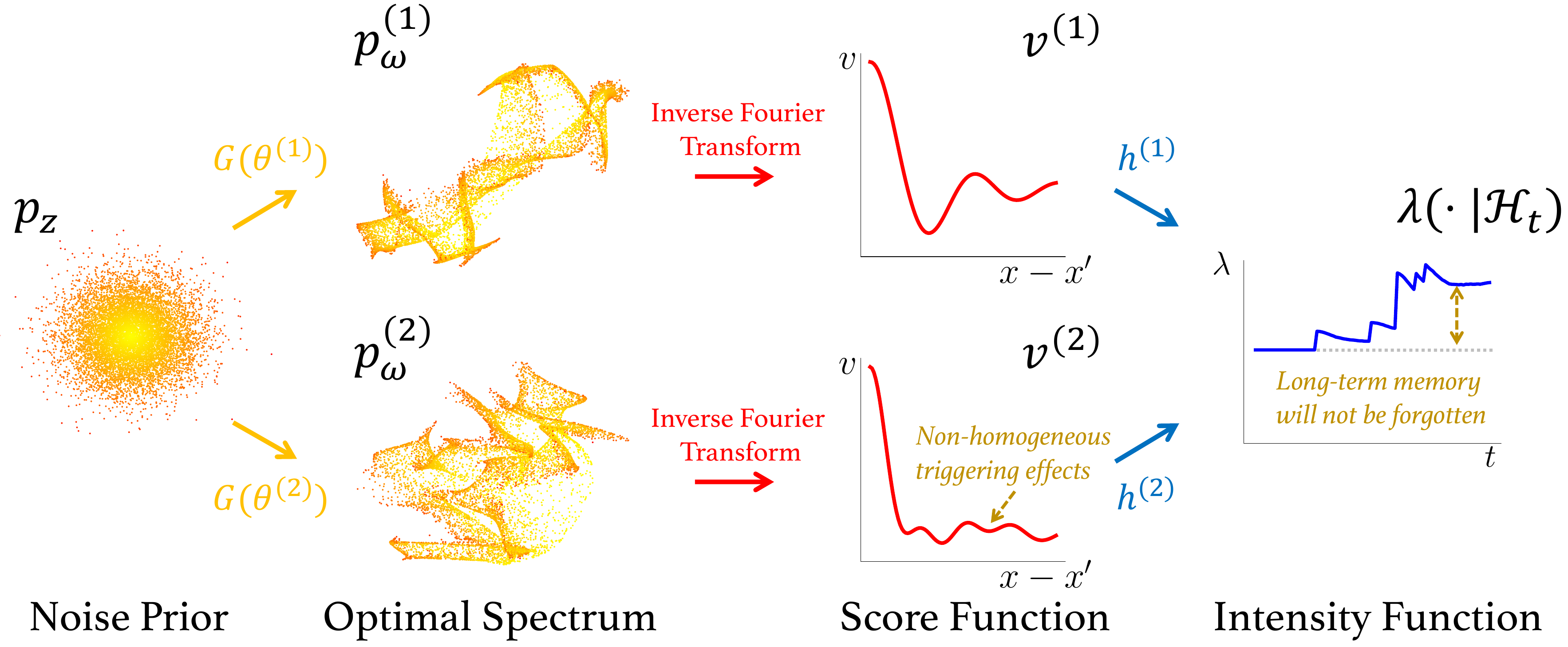}
\caption{\small{A real example of optimal spectrums, score function, and corresponding intensity function learned from DAPP. Here, the DAPP is trained using real 911 calls-for-service data with recorded time only in 2017 provided by Atlanta Police. There are $10,000$ Fourier features sampled from the optimal spectrums being used to reconstruct score functions. The right-most sub-figure represents the intensity of a 911 call sequence reported in a single day at beat 702. We can see that Fourier kernel score is able to capture non-homogeneous triggering effects of events and long-term memory will also not be forgotten in this case.}}
\label{fig:score-transformation}
% \vspace{-0.1in}
\end{figure}

\begin{figure*}[!h]
\vspace{-0.1in}
\begin{center}
  \subfigure[Fourier kernel]
  {\includegraphics[width=.32\linewidth]{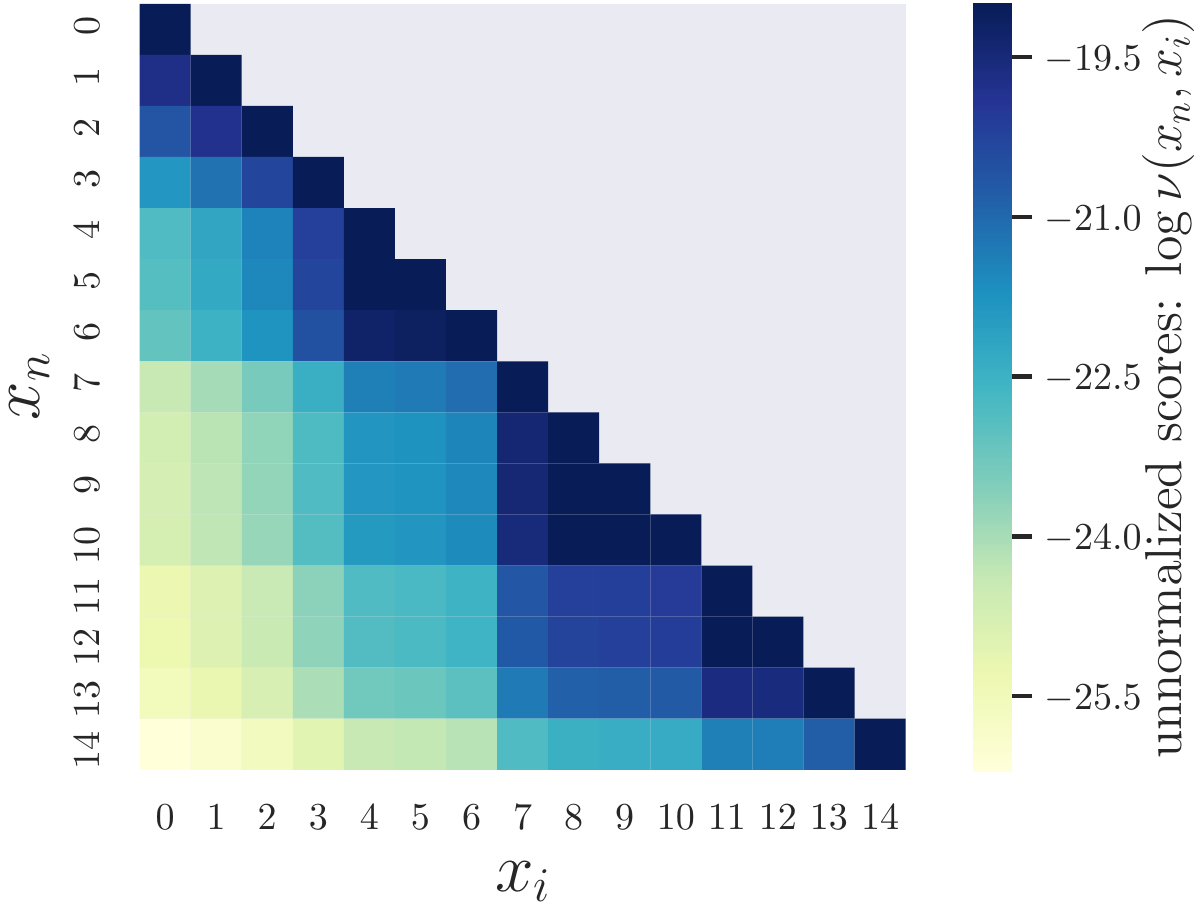}}
  \subfigure[dot-product]
  {\includegraphics[width=.32\linewidth]{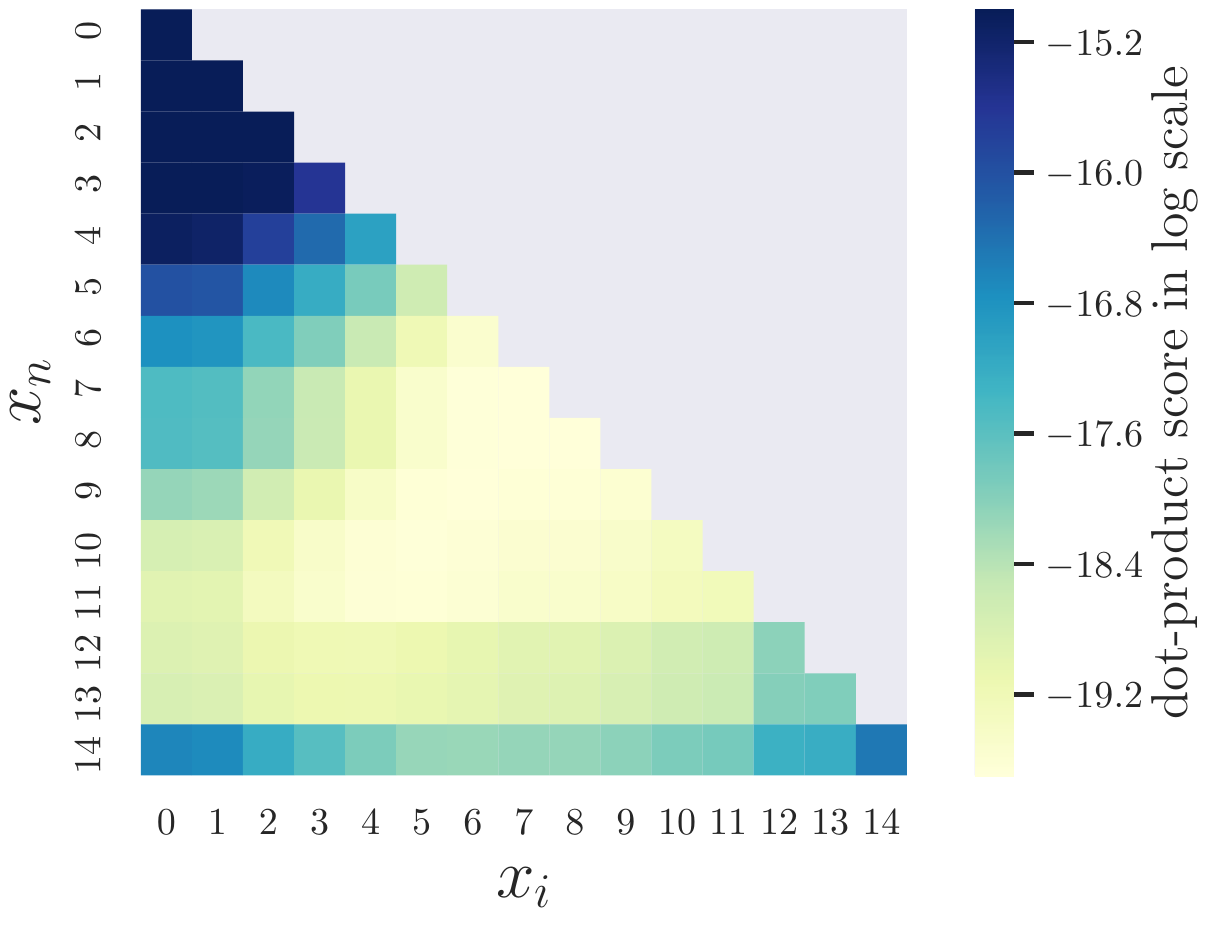}}
  \subfigure[truth]
  {\includegraphics[width=.32\linewidth]{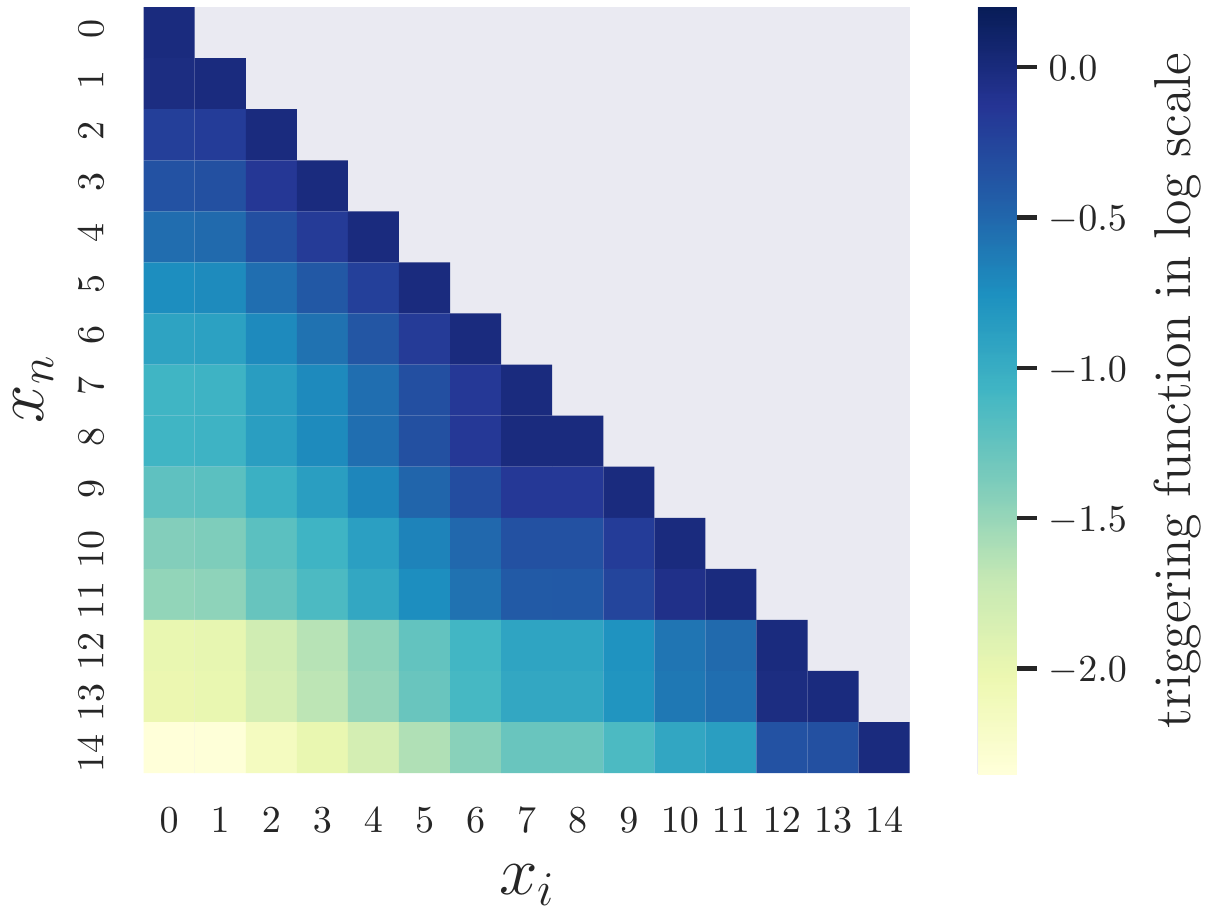}}
  \vspace{-0.1in}
  \caption{\small{Pairwise scores between events learned from DAPP using a synthetic Hawkes process data set. The $x_n$ denotes the current event and $x_i$ denotes past events, where $t_i < t_n$. The color of the entry at $n$-th row and $i$-th column in these figures indicates: (a) Fourier kernel scores; (b) dot-product score (using the same architecture by only substituting Fourier kernel score with dot-product score); (c) true triggering effects evaluated by triggering function $g(t, t_i) = \alpha \exp \{ - \beta (t - t_i) \}$ (does not exactly correspond to the score, but reveals some key facts on the correlation of events, e.g., exponential decaying over time).}}
  \label{fig:score-visualization}
\end{center}
\vspace{-0.2in}
\end{figure*}

\vspace{-0.1in}
\subsection{Online attention for streaming data}
\label{sec:online-attention}
\vspace{-0.1in}

The attention calculation may be computationally intractable for streaming data since the number of past events would overgrow as time goes on. Here, we propose an adaptive online attention algorithm to address this issue. Only a fixed number of ``important'' historical events with high average scores will be remembered for the attention calculation in each attention head. The procedure for collecting ``important'' events in each attention head is demonstrated as follows.

When the $i$-th event occurs, for a past event $x_j, t_j < t_i$ in $k$-th attention, we denote the set of its score against future events as $\mathscr{S}_{j}^{(k)} \coloneqq \{\widetilde\nu^{(k)}(x_i, x_j)\}_{i:t_j \le t_i}$. Then the average score of the event $x_j$ can be computed by 
\[
\bar\nu_{j}^{(k)} = (\sum\nolimits_{s \in \mathscr{S}_{j}^{(k)}} s)/|\mathscr{S}_{j}^{(k)}|,
\] 
where $|A|$ denotes the number of elements in set $A$. Hence, a recursive definition of the set of active events $\mathscr{A}_{i}^{(k)}$ in the $k$-th attention head up until the occurrence of the event $x_i$ is written as: 
\begin{align*}
    & \mathscr{A}_{i}^{(k)} = \mathcal{H}_{t_{i+1}},~\forall i \le \eta,\\
    & \mathscr{A}_{i}^{(k)} = \mathscr{A}_{i-1}^{(k)} \cup 
    \{x_i\}
    % \underset{j: t_j < t_i}{\arg \max} \left\{ \bar\nu_{j}^{(k)} \right\} 
    \setminus \underset{j: t_j < t_i}{\arg \min} \left\{ \bar\nu_{j}^{(k)} \right\},~\forall i > \eta,
\end{align*}
where $\eta$ is the maximum number of events we want to remember. 
The exact event selection is carried out by Algorithm~\ref{alg:online-attention} shown in Appendix~\ref{append:learning}. 
To perform the online attention, we substitute $\mathcal{H}_{t_n}$ in \eqref{eq:score} and \eqref{eq:attention} with $\mathscr{A}_{n}^{(k)}$ for all attention heads.
% only keep the top $\eta$ events with the largest mean and variance regrading their scores. In the score function \eqref{eq:score} and the attention \eqref{eq:attention}, we substitute the event set $\mathcal{H}_t$ with the ``active'' event set $\mathcal{A}_t$, which is defined as:

\vspace{-0.1in}
\subsection{Learning and simulation}
\label{sec:learning}
\vspace{-0.1in}

The proposed model is jointly parameterized by $\boldsymbol{\theta}=\{W, b, \{\theta^{(k)}, W^{(k)}_u, W^{(k)}_v\}_{k=1,\dots,K}\}$, which can be learned via \emph{maximum likelihood estimation} using the stochastic gradient descent. The log-likelihood function of the model can be obtained by substituting \eqref{eq:lambda} into \eqref{eq:log-likelihood} defined in Section~\ref{sec:point-process}. The exact learning algorithm is carried out by Algorithm~\ref{alg:learning} shown in Appendix~\ref{append:learning}. 

A default way to generate events from a point process is to use the thinning algorithm \citep{Daley2008, Gabriel2013}. However, the vanilla thinning algorithm suffers from low sampling efficiency as it needs to sample in the space $\mathcal{X}$ uniformly with the upper limit of the conditional intensity $\bar{\lambda}$ and only very few candidate points will be retained in the end. 
% In particular, given the parameter $\theta$, the computing complexity of the procedure increases exponentially with the size of the sampling space. 
%
To improve sampling efficiency, we use an efficient thinning algorithm summarized in Algorithm~\ref{alg:thinning}, Appendix~\ref{append:learning}. The ``proposal'' density is a non-homogeneous MTPP, whose intensity function is defined from the previous iterations. This analogous to the idea of rejection sampling \citep{Ogata1981}.

\vspace{-0.1in}
\section{Experiments}
\label{sec:experiments}
\vspace{-0.1in}

% \begin{figure*}[!b]
% \vspace{-0.1in}
%   \centering
%   \subfigure[Hawkes]{\includegraphics[scale=0.155]{}}
%   \subfigure[self-correction]{\includegraphics[scale=0.155]{}}
%   \subfigure[non-homogeneous $1$]{\includegraphics[scale=0.155]{}}
%   \subfigure[non-homogeneous $2$]{\includegraphics[scale=0.155]{}}
%   \caption{\small{Average log-likelihood of synthetic data sets versus training epochs. For each synthetic data set, we evaluate the performance of the five methods based on the maximum log-likelihood averaged per series calculated for the test data.}}
%   \label{fig:sim-res-loglik}
%   \vspace{-0.05in}
% \end{figure*}

\begin{figure*}[!h]
\vspace{-0.1in}
  \centering
  \subfigure[Hawkes]{\includegraphics[width=.24\linewidth]{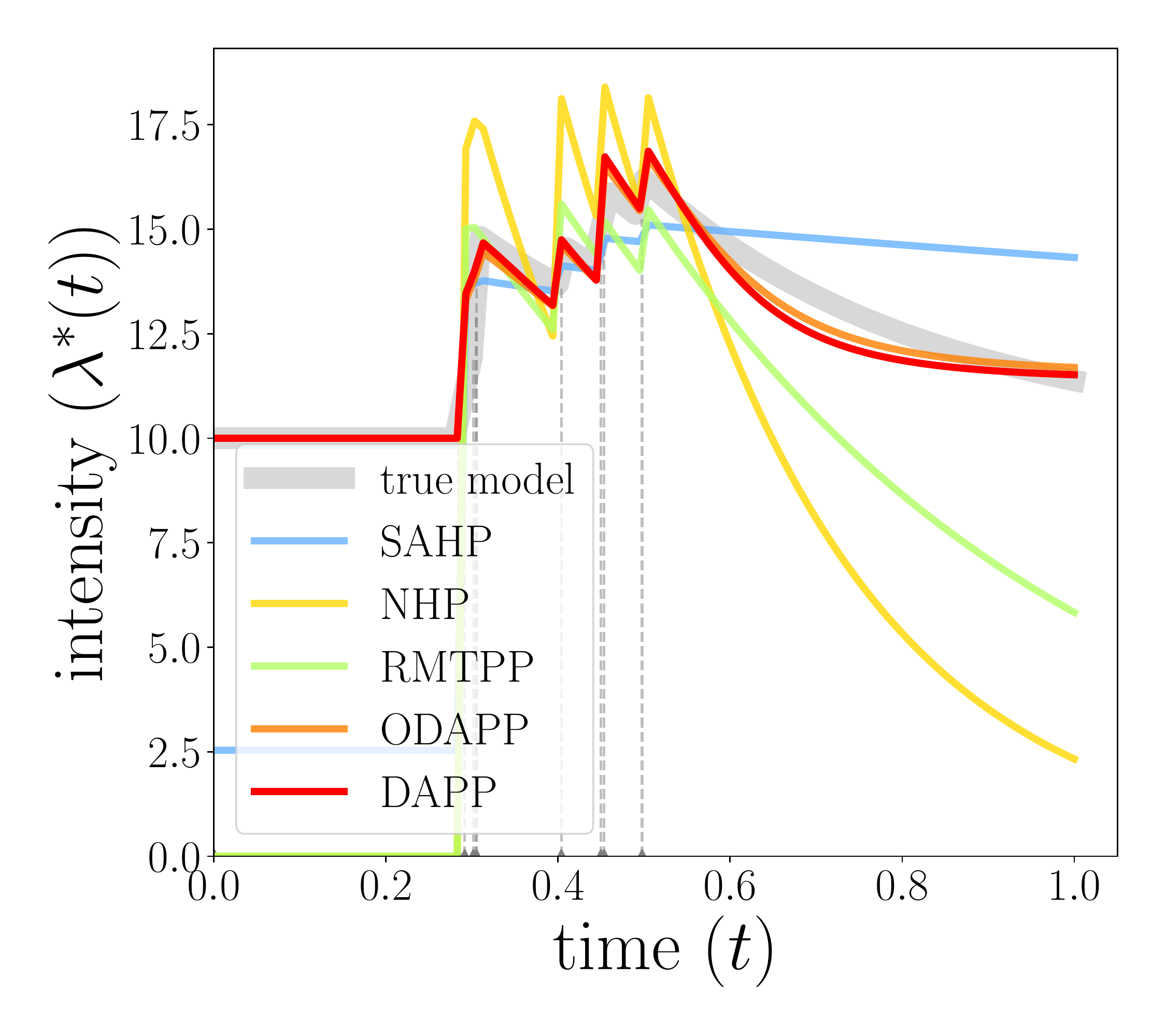}}
  \subfigure[self-correction]{\includegraphics[width=.24\linewidth]{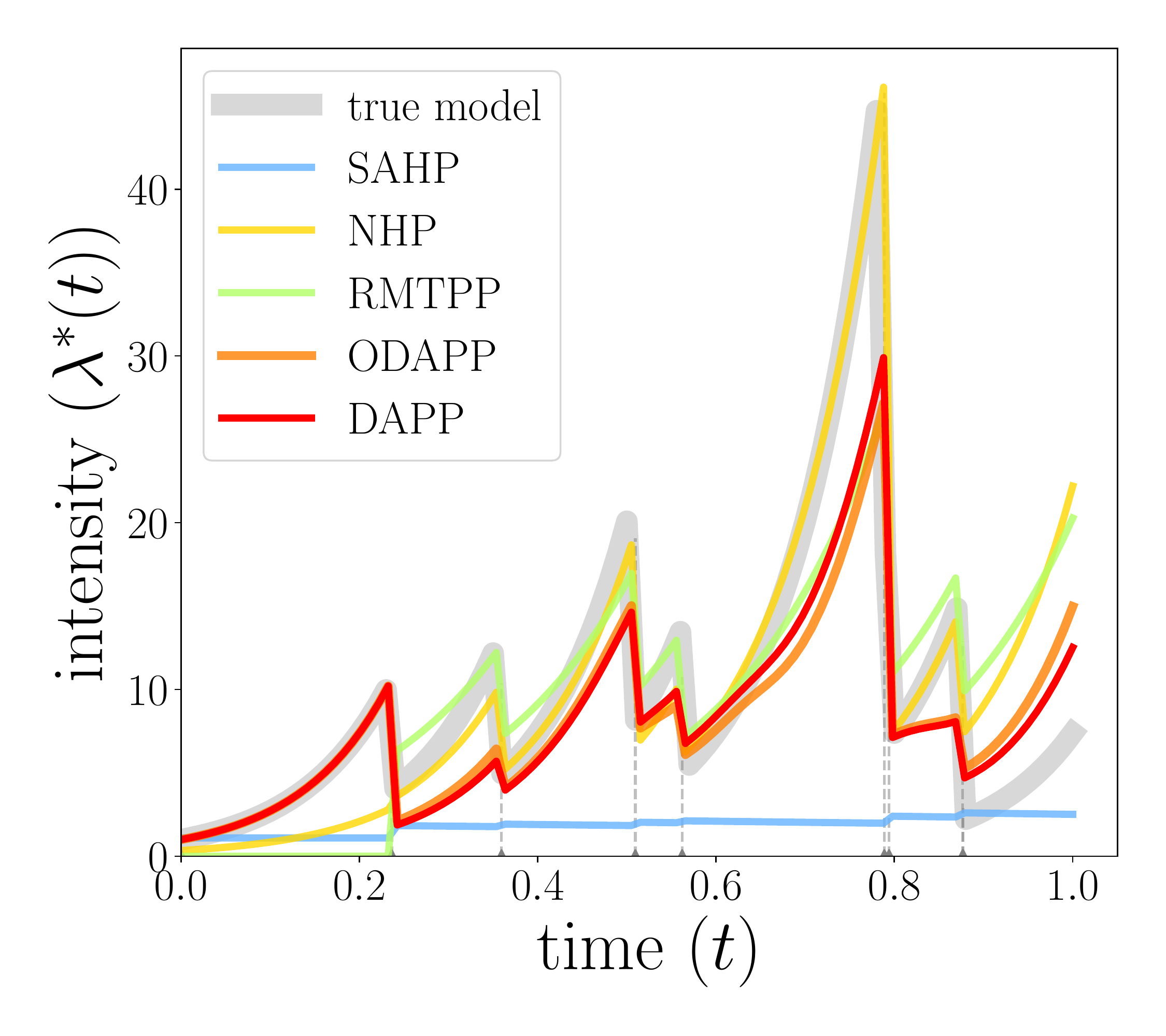}}
  \subfigure[non-homogeneous $1$]{\includegraphics[width=.24\linewidth]{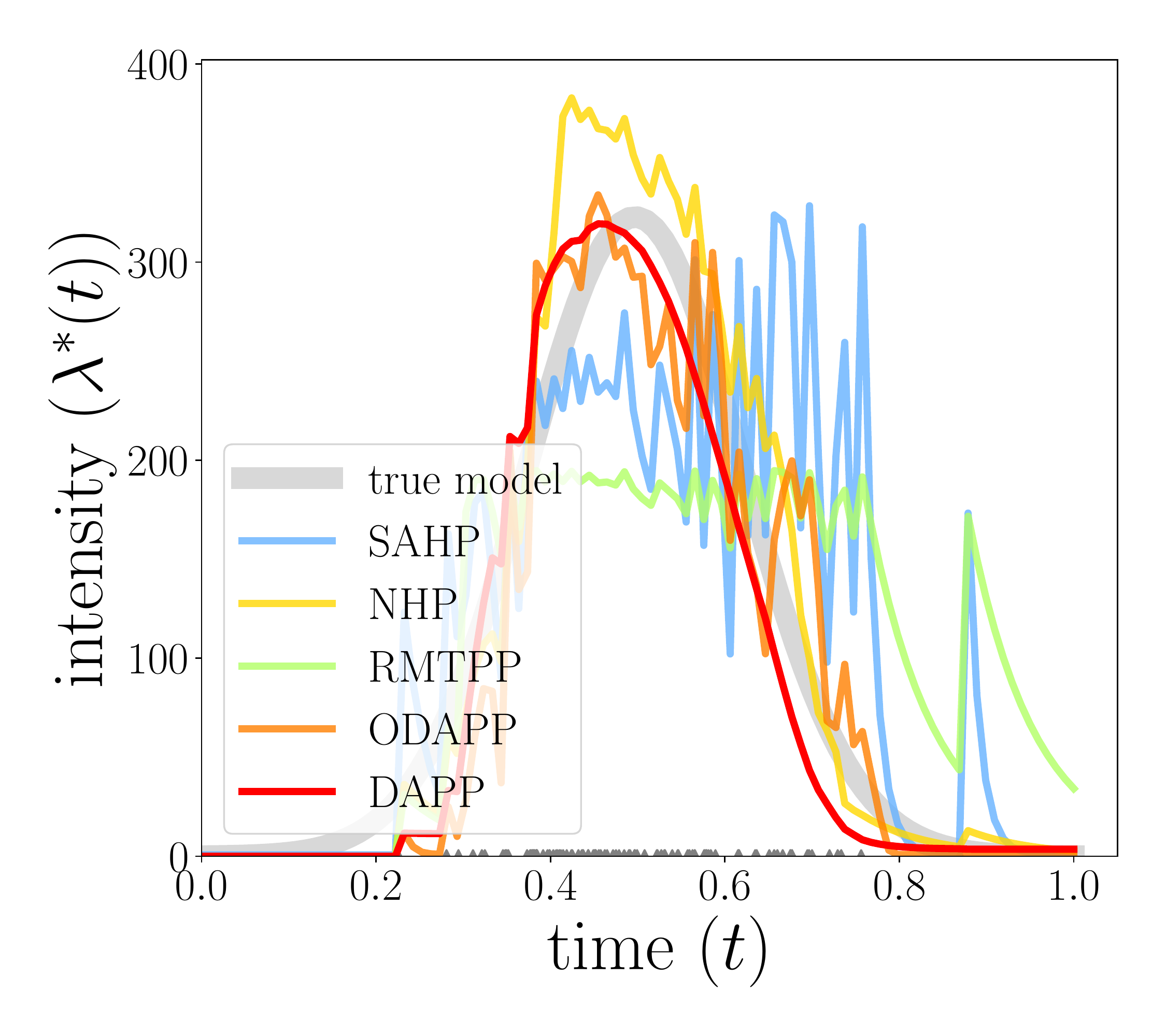}}
  \subfigure[non-homogeneous $2$]{\includegraphics[width=.24\linewidth]{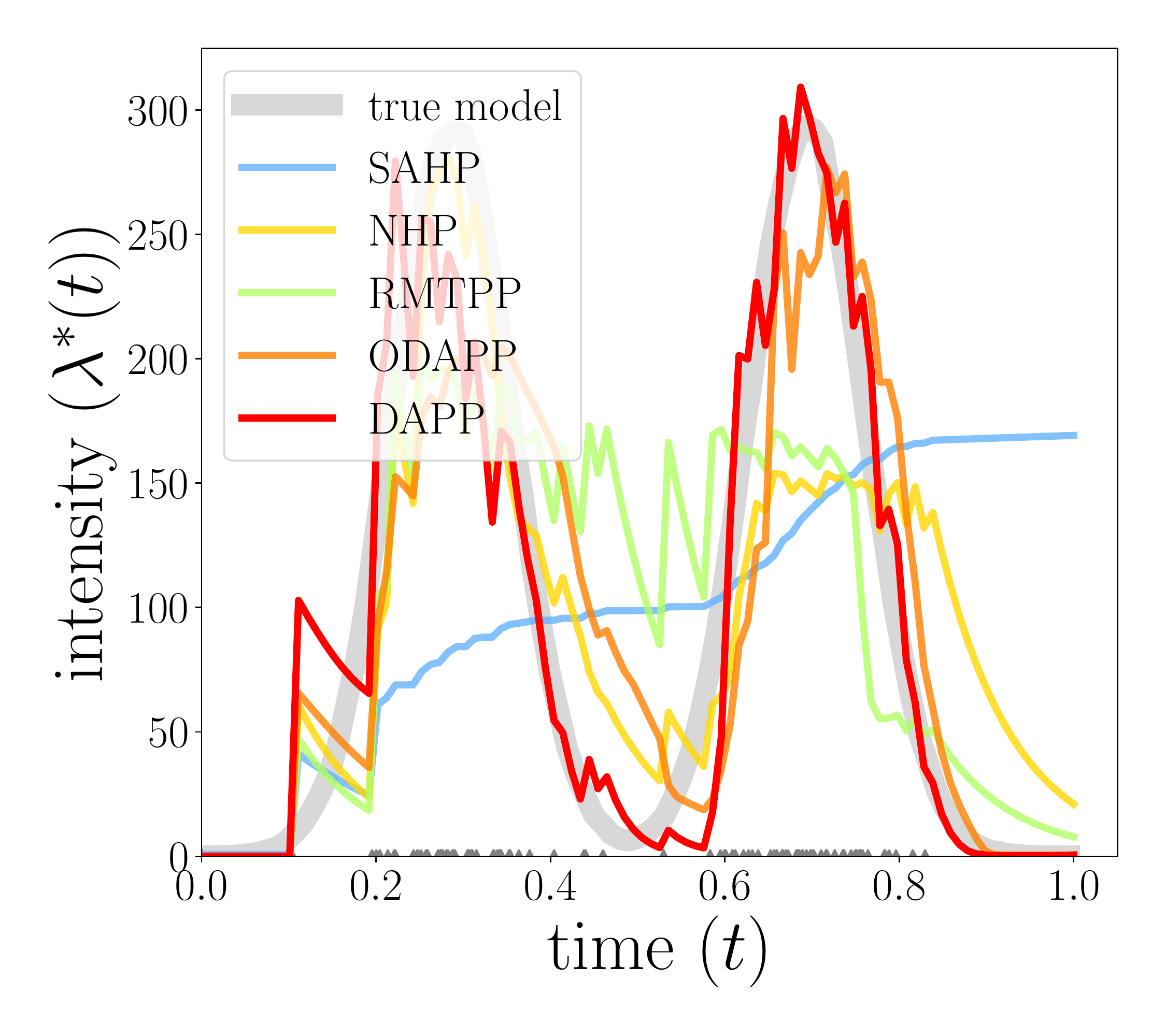}}
  \vspace{-0.1in}
  \caption{\small{Conditional intensity function estimated from synthetic data sets. Triangles at the bottom of each panel represent events. The ground truth of conditional intensities is indicated by the grayline.}}
  \label{fig:sim-res-intensity}
  \vspace{-0.1in}
\end{figure*}

\begin{figure*}[!h]
\vspace{-0.1in}
  \centering
  \subfigure[stock]{\includegraphics[width=.24\linewidth]{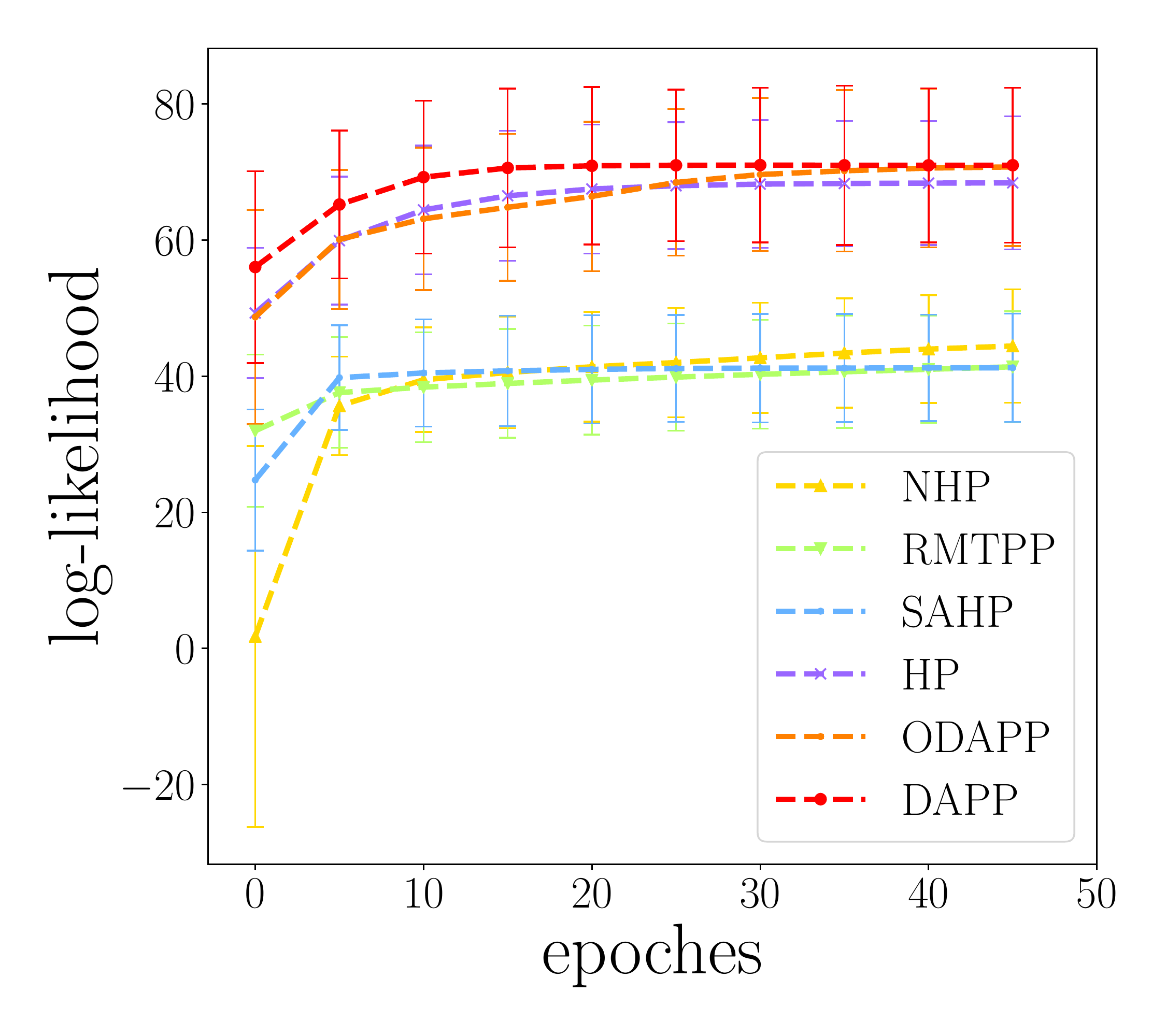}}
  \subfigure[MIMIC-III]{\includegraphics[width=.24\linewidth]{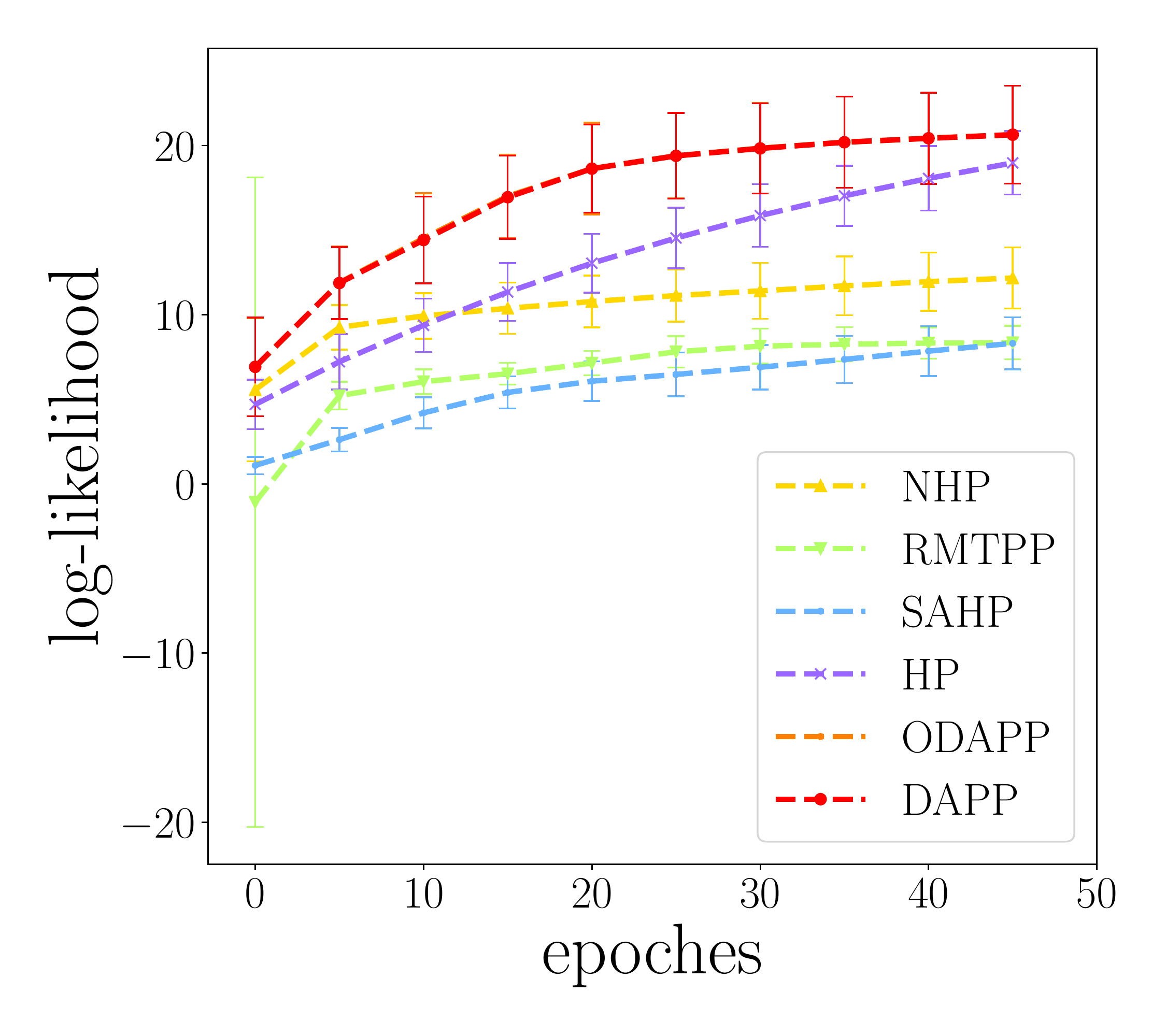}}
  \subfigure[meme]{\includegraphics[width=.24\linewidth]{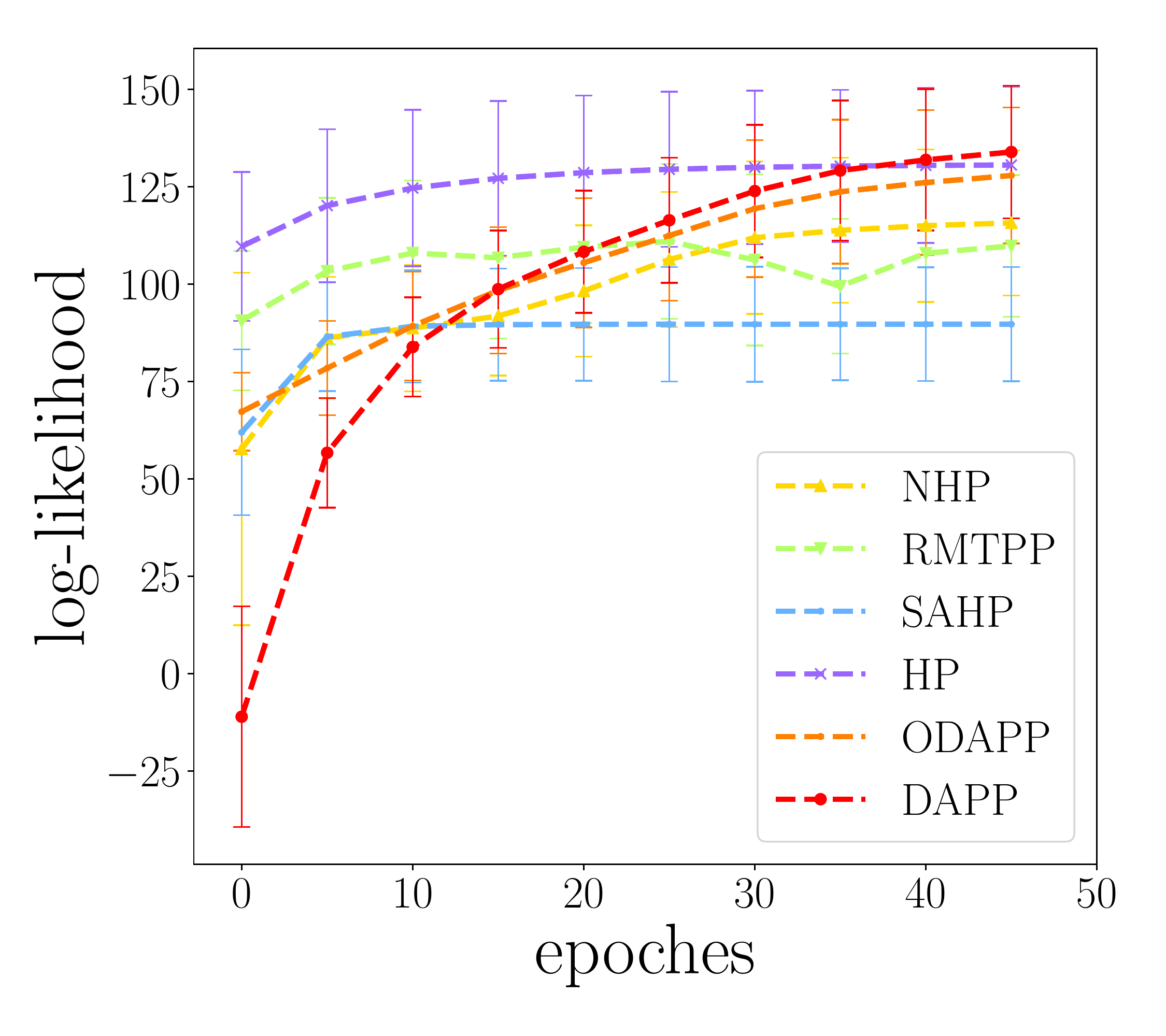}}
  \subfigure[traffic]{\includegraphics[width=.24\linewidth]{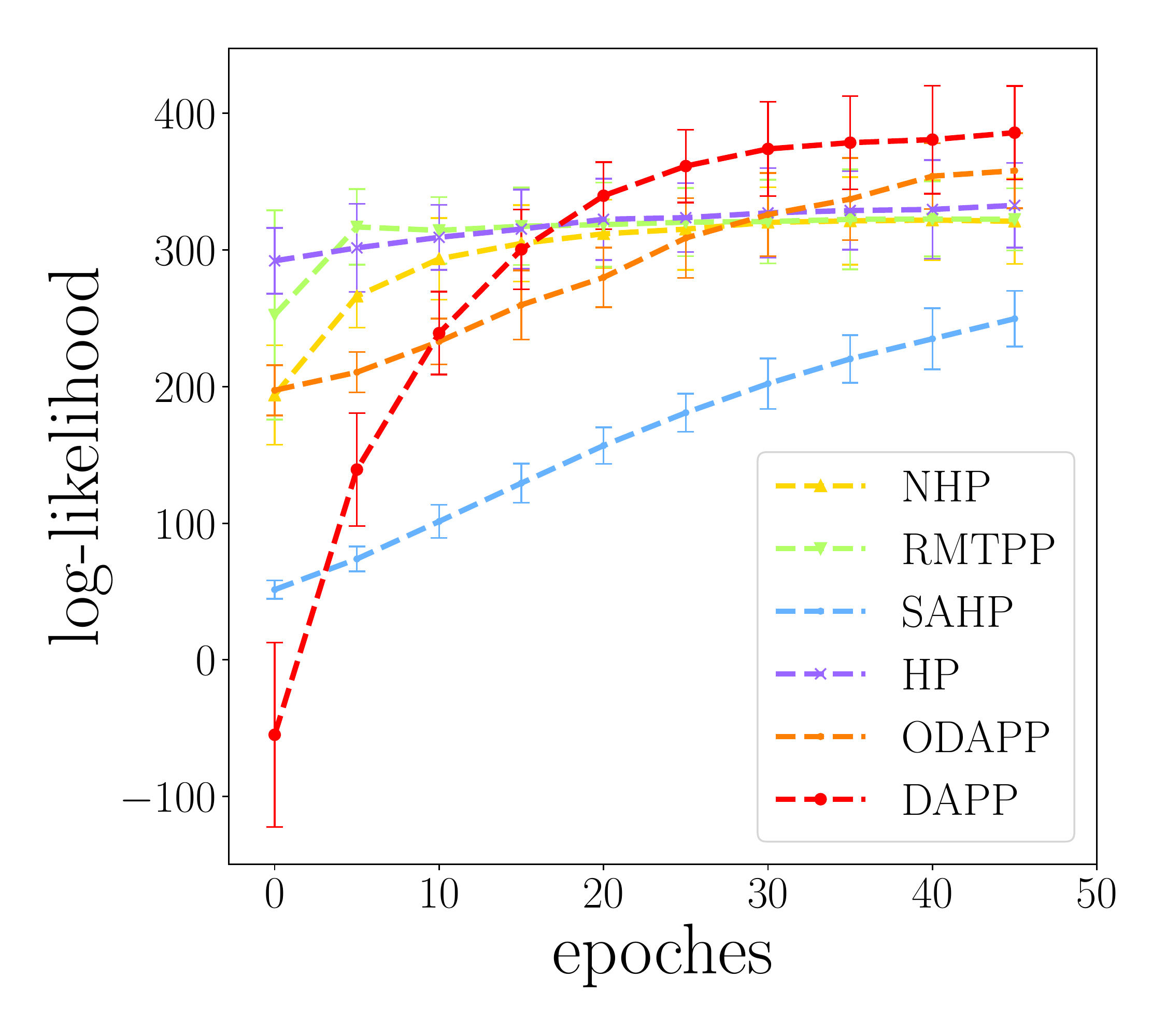}}
  \vspace{-0.1in}
  \caption{\small{The average log-likelihood of real data sets versus training epochs. For each real data set, we evaluate performance of the five methods according to the final log-likelihood averaged per event calculated for the test data.}}
  \label{fig:real}
  \vspace{-0.1in}
\end{figure*}

In this section, we conduct experiments on four synthetic data sets and four large-scale real-world data sets. 
We compare our DAPP and its online version (ODAPP) with the other four baselines by evaluating the mean square intensity-recovering error and the likelihood value, which have been widely adopted in the related works \citep{Mei2017, Omi2019, Zhang2019}. 
The implementation details of baselines are discussed in Section~\ref{sec:baseline-description}.
We describe the experiment configurations as follows: we consider two attention heads ($K=2$) in DAPP and ODAPP, where the Fourier feature generator $\theta^{(k)}$ of the $k$-th head is characterized by a fully-connected neural network with three hidden layers, where the widths of each layer are 128, 256, and 128, respectively. 
To learn DAPP and its associated optimal spectrums more efficiently, we adopt the stochastic gradient descent method and only sample a few points of Fourier features ($D=20$) for each mini-batch. 
For accurate intensity recovery, a more significant number of Fourier features ($D=10,000$) will be sampled in a bid to reconstruct a high-resolution optimal spectrum.
Besides, there is only a 50\% number of events are retained for training ODAPP, i.e., $\eta = 0.5 n$, where $n$ is the maximum length of sequences in each data set.

% \textcolor{blue}{Each data set is divided into the training and test data. In the training phase, the model parameters are estimated using the training data. For the optimization, we employ the Adam optimizer with the learning rate $= 0.9$, and the batch size is 40. We also choose the truncation depth d from [5, 10, 20, 40] using 20\% of the training data (see Sec. 2.2 for more details on truncation depth). In the test phase, we evaluate the predictive performances of the trained models using the test data. In each time step, the probability density function p? (t|t1 , t2 , . . . , ti ) of the time of the coming event given the past events {t1 , t2 , . . . , ti } is calculated from eq. (2), and scored by the log-likelihood ? log p?(ti+1|t1, t2, . . . , ti) for the actually observed ti+1 (a smaller score means a better predictive performance). The score is finally averaged over the events in the test data.}
%\subsection{Experimental Settings}
%\label{append:exp-setting}

\begin{table*}
\caption{the mean square error of recovering the intensity.}
\label{tab:pred-error}
\vspace{-0.1in}
\begin{center}
\begin{small}
\begin{sc}
\resizebox{1.\textwidth}{!}{%
\begin{tabular}{lccccccccccr}
\toprule[1pt]\midrule[0.3pt]
Data set & HP & SAHP & NHP & RMTPP & {DAPP+dot-prod} & {ODAPP+dot-prod} & {DAPP+NN} & {ODAPP+NN} & DAPP & ODAPP \\
\midrule
hawkes & 0.031& 18.3 & 49.9 & 35.9 & {15.3} & {19.3} & {1.221} & {1.893} & 0.258 & \textbf{0.166} \\
self-correction & 74.3 & 130.8 & 25.8 & 36.1 & {117.4} & {133.3} & {47.2} & {51.3} & \textbf{21.8}& 27.3 \\
non-homo $1$ & 1672.3 & 7165.5& 1431.6&6852.3 & {6201.5} & {7014.8} & {972.5} & {1124.1} & \textbf{605.7}& 1511.8 \\
non-homo $2$ & 3210.3 & 9858.9& 2063.1& 3854.8 & {9812.7} & {9733.1} & {1449.5} & {1722.7} & \textbf{1351.4} & 1527.9 \\
\midrule[0.3pt]\bottomrule[1pt]
\end{tabular}
}
\end{sc}
\end{small}
\end{center}
\vspace{-0.2in}
\end{table*}

\begin{table}[!b]
\vspace{-0.2in}
\caption{the average log-likelihood.}
\label{tab:loglik}
% \vspace{-0.05in}
\begin{center}
\begin{small}
\begin{sc}
\resizebox{0.48\textwidth}{!}{%
\begin{tabular}{lccccccr}
\toprule[1pt]\midrule[0.3pt]
Data set & HP & SAHP & NHP & RMTPP & DAPP & ODAPP \\
\midrule
hawkes   & 22.0 & 20.8 & 20.0 & 19.7 & \textbf{21.2} & 21.1  \\
self-correction & 3.9 & 3.5& 5.4& 6.9 & 7.1&\textbf{7.1} \\
non-homo $1$   & 437.8 & 432.4& 445.6&443.1 & 442.3& \textbf{457.0} \\
non-homo $2$   & 399.4 & 364.3& 410.1& 405.1  & \textbf{428.3} & 420.1     \\
mimic-iii   & 17.1  & 11.7& 14.4 &8.7 &\textbf{21.5} & 21.2\\
stock   & 66.3   & 43.1& 43.4& 44.0 & \textbf{72.9} & 72.9\\
meme   & 129.8   & 84.0& 113.4& 106.0 & \textbf{131.0}&  128.5     \\
traffic & 313.8   & 326.7& 324.4& 339.2 & \textbf{458.5}& 387.2\\
\midrule[0.3pt]\bottomrule[1pt]
\end{tabular}
}
\end{sc}
\end{small}
\end{center}
\vspace{-0.1in}
\end{table}

\vspace{-0.1in}
\subsection{Baseline methods}
\label{sec:baseline-description}
\vspace{-0.1in}

{\it Recurrent Marked Temporal Point Process} (\texttt{RMTPP}) assumes the following form for the conditional intensity function $\lambda^*$ in point processes, denoted as
$
\lambda^*(t) = \exp{\big({\boldsymbol{v}}^{\top} \boldsymbol{h}_j+\omega(t-t_j)+b\big)},
$
where the $j$-th hidden state in the RNN $\boldsymbol{h}_j$ is used to represent the history influence up to the nearest happened event $j$, and $w(t-t_j)$ represents the current influence. The $\boldsymbol{v}, \omega, b$ are trainable parameters \citep{Du2016}.
    
{\it Neural Hawkes Process} (\texttt{NHP}) specifies the conditional intensity function in point processes using a continuous-time LSTM, denoted as
$
\lambda^*(t) = f(\boldsymbol{\nu}^\top \boldsymbol{h}_t),
$
where the hidden state of the LSTM up to time $t$ represents the history influence, the $f(\cdot)$ is a softplus function which ensure the positive output given any input \citep{Mei2017}. 
    
{\it Self-Attentive Hawkes Process} (\texttt{SAHP}) adopts self-attention mechanism to model the historical information in the conditional intensity function, which is specified as
$
\lambda^*(t) = \text{softmax}\big(\mu+ \alpha \exp\{\omega(t-t_j)\}\big),
$
where $\mu, \alpha, w$ are computed via three non-linear mappings:
$\mu = \text{softplus}(\boldsymbol{h}W_\mu)$,
$\alpha = \text{tanh}(\boldsymbol{h}W_\alpha)$,
$\omega = \text{softplus}(\boldsymbol{h}W_\omega)$. 
The $W_\mu, W_\alpha, W_\omega$ are trainable parameters \citep{Zhang2019}. 

{\it Hawkes Process} (\texttt{HP}) specifies the conditional intensity function as
$
    \lambda^*(t) = \mu + \alpha \sum_{t_j < t} \beta \exp\{- \beta (t - t_j)\},
$
where parameters $\mu, \alpha, \beta$ can be estimated via maximizing likelihood \citep{Hawkes1971}.

\vspace{-0.1in}
\subsection{Synthetic data sets}
\label{sec:synthetic-data}
\vspace{-0.1in}

The synthetic data are obtained by the following four generative processes:
(1) \emph{Hawkes process}: the conditional intensity function is given by $\lambda^*(t) = \mu + \alpha \sum_{t_j < t} \beta \exp{- \beta((t-t_j))}$, where $\mu=10$, $\alpha=1$, and $\beta=1$;
(2) \emph{self-correction point process}: the conditional intensity function is given by $\lambda^*(t) = \exp{(\mu t - \sum_{t_i<t}\alpha)}$, where $\mu=10$, $\alpha=1$;
(3) \emph{non-homogeneous Poisson $1$}: The intensity function is given by $\lambda^*(t) = c \cdot\Phi(t-0.5)\cdot U[0,1]$ where $c=100$ is the sample size, the $\Phi(\cdot)$ is the PDF of standard normal distribution, and $U[a,b]$ is uniform distribution between $a$ and $b$;
(4) \emph{non-homogeneous Poisson $2$}: The intensity function is a composition of two normal functions, where $\lambda^*(t) = c_1 \cdot\Phi(6(t-0.35))\cdot U[0,1] + c_2 \cdot\Phi(6(t-0.75))\cdot U[0,1]$, where $c_1=50$, $c_2=50$.
Each synthetic data set contains 5,000 sequences with an average length of 30, where each data point in the sequence only contains the occurrence time of the event.

\vspace{-0.1in}
\subsection{Real data sets}
\label{append:real-data}
\vspace{-0.1in}

{\it Traffic Congestions} (traffic): We collect the data of traffic congestions from the Georgia Department of Transportation \citep{GDOT} over 178 days from 2017 to 2018, including 15,663 congestion events recorded by 86 observation sites. Each event consists of time, location, and congestion level. We partition the data into 178 sequences by day, and each sequence has an average length of 88. 

{\it Electrical Medical Records} (MIMIC-III): Medical Information Mart for Intensive Care III (MIMIC-III) \citep{Johnson2016} contains de-identified clinical visit time records from 2001 to 2012 for more than 40,000 patients. We select 2,246 patients with at least three visits. Each patient's visit history will be considered an event sequence, and each clinical visit will be considered an event.

{\it Financial Transactions} (stock): We collected data from the NYSE of the high-frequency transactions. It contains 0.7 million transaction records, each of which records the time (in milliseconds) and the possible action (sell or buy). We partition the raw data into 5,756 sequences with an average length of 48 by days.

{\it Memes} (meme): MemeTracker \citep{Leskovec2007}  tracks the meme diffusion over public media, which contains more than 172 million news articles or blog posts. The memes are sentences, such as ideas, proverbs, and recorded when it spreads to specific websites. We randomly sample 22,003 sequences of memes with an average length of 24.

\vspace{-0.1in}
\subsection{Synthetic data experiment results}
\label{sec:sim-data}
\vspace{-0.1in}

In the following experiments with synthetic data, we confirmed that our deep attention point process model could capture synthetic events' spatio-temporal dynamics. 
We first summarized the mean square error of recovering the true intensity in Table~\ref{tab:pred-error}, where our methods achieve the minimal error in recovering intensities. 
We also visualized recovered intensity over time given a randomly-picked sequence from each data set in Figure~\ref{fig:sim-res-intensity}. The solid grey line represents the true intensity of the sequence. The result shows that our methods can accurately recover the temporal intensity. It is noteworthy that our approach can also capture the sequences' dynamics with non-homogeneous temporal intensity, as shown in Figure~\ref{fig:sim-res-intensity} (a), (b), which is extremely challenging to characterize by the other baselines. 
Besides, we emphasize that the online version of our approach (ODAPP) also shows competitive performances against other methods, where only 50\% of events are used.

To further investigate the effects of deep Fourier kernel on discrete event modeling, we consider the following two ablation studies: we replace the deep Fourier kernel score function in the DAPP by
(a) \texttt{DAPP+dot-prod}: the conventional dot-product; 
(b) \texttt{DAPP+NN}: a fully-connected neural network (with the same configuration as the generator in deep Fourier kernel), where the network's input is the concatenation of projections of two events, and the scalar output is the score. 
As we can see from Table~\ref{tab:pred-error}, the non-linearity of the score function plays a pivotal role in modeling the spatio-temporal dynamics between these events and drastically reduces the mean square error. In particular, the deep Fourier kernel enjoys a greater expressive power in representing non-linear triggering effects comparing to a simple neural network. The above result confirms that the combination of the attention and our deep Fourier kernel-based score leads to event modeling and prediction success. 

% \begin{table}[!h]
% \vspace{-0.2in}
% \caption{mean square error of recovering intensity.}
% \label{tab:pred-error}
% % \vspace{-0.05in}
% \begin{center}
% \begin{small}
% \begin{sc}
% \resizebox{0.48\textwidth}{!}{%
% \begin{tabular}{lcccccr}
% \toprule[1pt]\midrule[0.3pt]
% Data set & SAHP & NHP & RMTPP & DAPP & ODAPP \\
% \midrule
% hawkes    & 18.3 & 49.9 & 35.9 & 0.258 & \textbf{0.166} \\
% self-correction & 130.8& 25.8& 36.1 & \textbf{21.8}& 27.3 \\
% non-homo $1$    & 7165.5& 1431.6&6852.3 & \textbf{605.7}& 1511.8 \\
% non-homo $2$    & 9858.9& 2063.1& 3854.8  & \textbf{1097.6} & 1527.9     \\
% \midrule[0.3pt]\bottomrule[1pt]
% \end{tabular}
% }
% \end{sc}
% \end{small}
% \end{center}
% \vspace{-0.1in}
% \end{table}

\vspace{-0.1in}
\subsection{Real data experiment results}
\label{sec:real-data}
\vspace{-0.1in}

This section evaluates the performance of our methods on real-world data sets from a diverse range of domains, including a spatio-temporal data set and three other temporal data sets.
Due to a lack of true knowledge of intensity in real data, the recovery error is unavailable. 
Here, we reported the average log-likelihood of each method over training epochs on the testing data in Figure~\ref{fig:real} and summarized the highest average log-likelihood each method could obtain after the convergence in Table~\ref{tab:loglik}. 
As we can see, our DAPP and ODAPP outperform the other alternatives with higher average log-likelihood values on various data sets. 

\vspace{-0.1in}
\section{Conclusion}
\label{sec:conclusion}
\vspace{-0.1in}

We proposed an attention-based spatio-temporal processes model with a deep Fourier kernel, where the spectrum represented by neural networks captures complex non-linear dependence on past events. As demonstrated by our experiments with synthetic and real data, our method achieves competitive performance in achieving higher log-likelihood and smaller recovery error for conditional intensity function of a point process compared to the state-of-the-art.

\newpage

% \vspace{-0.1in}
%\section*{Broader Impact}
%\vspace{-0.1in}
%
%Social goods, such as healthcare, smart city, and information networks, often produce ordered event data in continuous time. There is a recent surge of interests in point processes for modling such discrete events, especially recurrent neural networks (RNNs) based point process models, which represents the history information using the hidden state. 
%However, such kind of methods may be limited if some remote events are significant to the future, since the long-term memory is tended to be forgotten due to the recurrent structure of (RNNs).
%Our method can be extended to more general scenarios in discrete event modeling, due to the fact that the model is able to capture possible non-homogeneous dynamics between events and consider long-term effects.
%As a matter of fact, the computational burden in calculating the integral term of the likelihood function for the point process model has long been a critical issue on extending the point process model to higher dimensional data space.
%For most of existing the point process models, more work is needed to find a more efficient computational framework to deal with the integral term in the likelihood function without sacrificing the expressiveness of the model. 

% \newpage
\bibliography{refs}

\appendix
\onecolumn
\title{Deep Fourier Kernel for Self-Attentive Point Processes: \\
Supplementary Materials}

\section{Proof for Proposition~\ref{prop:score-reformulation}}
\label{append:proof-prop-1}

For the notational simplicity, we omit all the index of attention head $(k)$ and denote $W_u x$ as $x$.
First, since both $\nu$ and $p$ are real-valued, it suffices to consider only the real portion of $e^{ix}$ when invoking Theorem~\ref{thm1}. Thus, using $\text{Re}[e^{ix}] = \text{Re}[\cos(x) + i \sin(x)] = cos(x)$, we have
\[
    \nu(x, x^\prime) = \text{Re}[\nu(x, x^\prime)] = \int_\Omega p_\omega(\omega) \cos(\omega^\top (x - x^\prime)) d\omega.
\]
Next, we have
\begin{align*}
    &~ \int_\Omega p_\omega(\omega) \cos\left(\omega^\top (x - x^\prime)\right) d\omega\\
    \overset{(i)}{=} &~ \int_\Omega p_\omega(\omega) \cos\left(\omega^\top (x - x^\prime)\right) d\omega + \int_\Omega \int_0^{2\pi} \frac{1}{2\pi} p_\omega(\omega) \cos\left( \omega^\top (x+x^\prime) + 2 b_u \right) db_u d\omega \\
    = &~ \int_\Omega \int_0^{2\pi} \frac{1}{2\pi} p_\omega(\omega) \left[ \cos\left(\omega^\top (x - x^\prime)\right) + \cos\left( \omega^\top (x+x^\prime) + 2 b_u \right) \right] db_u d\omega\\
    = &~ \int_\Omega \int_0^{2\pi} \frac{1}{2\pi} p_\omega(\omega) \left[ 2 \cos(\omega^\top x + b_u) \cdot \cos(\omega^\top x^\prime + b_u) \right] db_u d\omega\\
    = &~ \int_\Omega p_\omega(\omega) \int_0^{2\pi} \frac{1}{2\pi} \left[ \sqrt{2} \cos(\omega^\top x + b_u) \cdot \sqrt{2} \cos(\omega^\top x^\prime + b_u) \right] db_u d\omega\\
    = &~ \mathbb{E} \left [ \phi_\omega(x) \cdot \phi_\omega(x^\prime) \right ]. 
\end{align*}
where $\phi_\omega(x) \coloneqq \sqrt{2} \cos(\omega^\top x + b_u)$, $\omega$ is sampled from $p_\omega$, and $b_u$ is uniformly sampled from $[0, 2\pi]$. The equation $(i)$ holds since the second term equals to 0 as shown below:
\begin{align*}
    \int_\Omega \int_0^{2\pi} p_\omega(\omega) \cos\left( \omega^\top (x+x^\prime) + 2 b_u \right) db_u d\omega
    = &~ \int_\Omega p_\omega(\omega) \int_0^{2\pi} \cos\left( \omega^\top (x+x^\prime) + 2 b_u \right) db_u d\omega \\
    = &~ \int_\Omega p_\omega(\omega) \cdot 0 \cdot d\omega = 0.
\end{align*}
Therefore, we can obtain the result in Proposition~\ref{prop:score-reformulation}.

\section{Proof for Proposition~\ref{prop:score-convergence}}
\label{append:proof-prop-2}

Similar to the proof in Appendix~\ref{append:proof-prop-1}, we omit all the index of attention head $(k)$ and denote $W_u x$ as $x \in \mathcal{X}$ for the notational simplicity. Recall that we denote $R$ as the radius of the Euclidean ball containing $\mathcal{X}$ in Section~\ref{sec:score}. In the following, we first present two useful lemmas.

\begin{lemma}
\label{lemma:prop-2-lemma-1}
Assume $\mathcal{X} \subset \mathbb{R}^d$ is compact. Let $R$ denote the radius of the Euclidean ball containing $\mathcal{X}$, then for the kernel-induced feature mapping $\Phi$ defined in \eqref{eq:score-def-3}, the following holds for any $0 < r \le 2R$ and $\epsilon > 0$:
\[
    \mathbb{P}\left\{\underset{x, x^\prime \in \mathcal{X}}{\sup} \left | \Phi(x)^\top \Phi(x^\prime) - \nu(x, x^\prime) \right | \ge \epsilon \right\} \le 2 \mathcal{N}(2R, r) \exp\left\{ - \frac{D \epsilon^2}{8}\right\} + \frac{4r\sigma_p}{\epsilon}.
\]
where $\sigma_p^2 = \mathbb{E}_{\omega \sim p_\omega} [\omega^\top \omega] < \infty$ is the second moment of the Fourier features, and $\mathcal{N}(R, r)$ denotes the minimal number of balls of radius $r$ needed to cover a ball of radius $R$.
\end{lemma}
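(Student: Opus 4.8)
The plan is to run the standard $\epsilon$-net (covering) argument for random Fourier features. Writing $s(x,x') := \Phi(x)^\top \Phi(x')$ for the empirical kernel, I would first observe that, by Proposition~\ref{prop:score-reformulation}, $s$ is an average of $D$ i.i.d.\ terms whose common expectation is $\nu(x,x')$, and that both $s$ and $\nu$ are shift-invariant, i.e.\ functions of the difference $\delta = x - x'$ alone. This reduces the supremum over $\mathcal{X}\times\mathcal{X}$ to a supremum over the difference set $\{x - x' : x,x'\in\mathcal{X}\}$, which is contained in the Euclidean ball of radius $2R$. Defining the centered fluctuation $f(\delta) := s - \nu$, we have $\mathbb{E}[f(\delta)] = 0$ pointwise, and the goal becomes bounding $\mathbb{P}[\sup_\delta |f(\delta)| \ge \epsilon]$.

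Next I would cover the radius-$2R$ ball by $T = \mathcal{N}(2R,r)$ balls of radius $r$ with centers $\delta_1,\dots,\delta_T$ and control $f$ in two pieces. First, at each anchor $\delta_i$ the quantity $f(\delta_i)$ is an average of $D$ i.i.d.\ terms $\cos(\omega_j^\top \delta_i) - \nu(\delta_i)$ lying in an interval of length $2$, so Hoeffding's inequality gives $\mathbb{P}[|f(\delta_i)|\ge \epsilon/2]\le 2\exp(-D\epsilon^2/8)$, and a union bound over the $T$ centers yields the first term $2\mathcal{N}(2R,r)\exp(-D\epsilon^2/8)$. Second, I would control the Lipschitz constant $L_f = \sup_\delta \|\nabla f(\delta)\|$. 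Since $\nabla s = -\tfrac1D\sum_j \omega_j \sin(\omega_j^\top \delta)$ and $\nabla \nu = \mathbb{E}[\nabla s]$, the triangle and Jensen inequalities give $\mathbb{E}[L_f] \le \mathbb{E}\|\nabla s\| + \|\nabla \nu\| \le 2\sqrt{\mathbb{E}\|\omega\|^2} = 2\sigma_p$, using $|\sin|\le 1$ and the finiteness $\sigma_p^2 < \infty$. Markov's inequality then gives $\mathbb{P}[L_f \ge \epsilon/(2r)] \le 4r\sigma_p/\epsilon$, the second term.

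Finally I would assemble the pieces: on the complement of both failure events, any $\delta$ with nearest center $\delta_i$ (so $\|\delta - \delta_i\|\le r$) satisfies $|f(\delta)| \le |f(\delta_i)| + L_f \|\delta - \delta_i\| \le \epsilon/2 + (\epsilon/(2r))\cdot r = \epsilon$. A union bound over the two failure events then produces the claimed inequality.

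The main obstacle sits in the regularity estimate. Establishing $\mathbb{E}[L_f]\le 2\sigma_p$ requires interchanging expectation and gradient (justified by smoothness of $\cos$ together with $\sigma_p^2 < \infty$) and checking that the i.i.d.\ gradient summands $\omega_j \sin(\omega_j^\top \delta)$ have a second moment bounded uniformly in $\delta$; the assumption $\sigma_p^2 < \infty$ is precisely what makes $L_f$ integrable and Markov's inequality applicable. A secondary subtlety is justifying the shift-invariance reduction to the difference variable, so that the covering is taken over a single radius-$2R$ ball rather than over the full product domain $\mathcal{X}\times\mathcal{X}$.
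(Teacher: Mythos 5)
Your proposal is correct and follows the same skeleton as the paper's proof: reduce to the difference set $\Delta = \{x - x^\prime\}$ contained in a ball of radius $2R$, cover it with $\mathcal{N}(2R, r)$ balls, apply Hoeffding plus a union bound at the anchors to obtain the term $2\mathcal{N}(2R,r)\exp(-D\epsilon^2/8)$, control the Lipschitz constant $L$ of $S(\delta) = \Phi(x)^\top\Phi(x^\prime) - \nu(x,x^\prime)$ through a moment bound on $\nabla S$, and combine via $|S(\delta)| \le |S(\delta_j)| + L\,\|\delta - \delta_j\|$. The one substantive difference is the Markov step: you bound the first moment, $\mathbb{E}[L] \le 2\,\mathbb{E}\|\omega\| \le 2\sigma_p$ (Jensen), and apply Markov's inequality to $L$ directly, obtaining $\mathbb{P}[L \ge \epsilon/(2r)] \le 4r\sigma_p/\epsilon$ --- which is exactly the second term in the lemma as stated. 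The paper instead bounds the second moment, $\mathbb{E}[L^2] \le 4\sigma_p^2$, and applies Markov to $L^2$, which yields $\left(4r\sigma_p/\epsilon\right)^2$; note that the concluding display of the paper's proof indeed carries this square, so the paper's proof and its lemma statement are inconsistent with each other, and your route is the one that establishes the statement verbatim. Both variants are legitimate, and the squared version is the tighter one precisely in the regime $4r\sigma_p/\epsilon \le 1$ that matters for the subsequent optimization over $r$ (where the paper does use the squared form). Two small points to tighten in your write-up: when you write $\mathbb{E}[L_f] \le \mathbb{E}\|\nabla s\| + \|\nabla \nu\|$, the suprema over $\delta$ must be retained --- the fix is immediate because $\|\nabla s(\delta)\| \le \frac{1}{D}\sum_{j}\|\omega_j\|$ holds uniformly in $\delta$ (so the supremum is harmless), and $\sup_\delta \|\nabla \nu(\delta)\| \le \mathbb{E}\bigl[\frac{1}{D}\sum_j \|\omega_j\|\bigr] = \mathbb{E}\|\omega\|$; and your treatment of $s(x,x^\prime)$ as a function of $\delta$ alone implicitly discards the random-phase cross term $\cos\left(\omega^\top(x + x^\prime) + 2b\right)$ in $\Phi(x)^\top \Phi(x^\prime)$, a simplification the paper's proof also makes silently when it computes $\nabla(\Phi(x)^\top\Phi(x^\prime))$ as $\nabla\bigl(\frac{1}{D}\sum_i \cos(\omega_i^\top(x - x^\prime))\bigr)$, so you are no worse off than the original on that score, but a fully rigorous version should either use the paired sine--cosine feature map or account for that term explicitly.
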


\begin{proof}[Proof of Lemma \ref{lemma:prop-2-lemma-1}]
Now, define $\Delta = \{\delta: \delta = x - x^\prime,~, x,x^\prime \in \mathcal{X} \}$ and note that $\Delta$ is contained in a ball of radius at most $2R$. $\Delta$ is a closed set since $\mathcal{X}$ is closed and thus $\Delta$ is a compact set. 
Define $B = \mathcal{N}(2R, r)$ the number of balls of radius $r$ needed to cover $\Delta$ and let $\delta_j$, for $j \in [B]$ denote the center of the covering balls. 
Thus, for any $\delta \in \Delta$ there exists a $j$ such that $\delta = \delta_j + r^\prime$ where $|r^\prime| < r$.

Next, we define $S(\delta) = \Phi(x)^\top \Phi(x^\top) - \nu(x, x^\prime)$, where $\delta = x - x^\prime$. Since $S$ is continuously differentiable over the compact set $\Delta$, it is $L$-Lipschitz with $L = \sup_{\delta \in \Delta} || \nabla S(\delta) ||$. Note that if we assume $L < \frac{\epsilon}{2r}$ and for all $j \in [B]$ we have $|S(\delta_j)| < \frac{\epsilon}{2}$, then the following inequality holds for all $\delta = \delta_j + r^\prime \in \Delta$:
\begin{equation}
    |S(\delta)| = |S(\delta_j + r^\prime)| \le L |\delta_j - (\delta_j + r^\prime)| + |S(\delta_j)| \le rL + \frac{\epsilon}{2} < \epsilon.
    \label{eq:prop-2-part-1}
\end{equation}
The remainder of this proof bounds the probability of the events $L > \epsilon / (2r)$ and $|S(\delta_j)|\ge \epsilon/2$. Note that all following probabilities and expectations are with respect to the random variables $\omega_1, \dots, \omega_D$.

To bound the probability of the first event, we use Proposition~\ref{prop:score-reformulation} and the linearity of expectation, which implies the key fact $\mathbb{E}[\nabla(\Phi(x)^\top \Phi(x^\prime))] = \nabla \nu(x, x^\top)$. We proceed with the following series of inequalities:
\begin{align*}
    \mathbb{E}\left[L^2\right] 
    & = \mathbb{E}\left[\underset{\delta\in\Delta}{\sup}||\nabla S(\delta)||^2\right]\\
    & = \mathbb{E}\left[\underset{x, x^\prime \in \mathcal{X}}{\sup}||\nabla(\Phi(x)^\top \Phi(x^\prime)) - \nabla \nu(x, x^\prime) ||^2\right]\\
    & \overset{(i)}{\le} 2 \mathbb{E}\left[\underset{x, x^\prime \in \mathcal{X}}{\sup}||\nabla(\Phi(x)^\top \Phi(x^\prime))||^2\right] + 
    2 \underset{x, x^\prime \in \mathcal{X}}{\sup}||\nabla \nu(x, x^\prime) ||^2\\
    & = 2 \mathbb{E}\left[\underset{x, x^\prime \in \mathcal{X}}{\sup}||\nabla(\Phi(x)^\top \Phi(x^\prime))||^2\right] + 
    2 \underset{x, x^\prime \in \mathcal{X}}{\sup}||\mathbb{E}\left[ \nabla(\Phi(x)^\top \Phi(x^\prime))\right] ||^2\\
    & \overset{(ii)}{\le} 4 \mathbb{E}\left[\underset{x, x^\prime \in \mathcal{X}}{\sup}||\nabla(\Phi(x)^\top \Phi(x^\prime))||^2\right],
\end{align*}
where the first inequality $(i)$ holds due to the the inequality $||a + b||^2 \le 2 ||a||^2 + 2 ||b||^2$ (which follows from Jensen’s inequality) and the subadditivity of the supremum function. The second inequality $(ii)$ also holds by Jensen’s inequality (applied twice) and again the subadditivity of supremum function. Furthermore, using a sum-difference trigonometric identity and computing the gradient with respect to $\delta = x - x^\prime$, yield the following for any $x, x^\prime \in \mathcal{X}$:
\begin{align*}
    \nabla(\Phi(x)^\top \Phi(x^\prime)) 
    & = \nabla \left(\frac{1}{D} \sum_{i=1}^D \cos(\omega_i^\top (x-x^\prime))\right) \\
    & = \frac{1}{D} \sum_{i=1}^D \omega_i \sin(\omega_i^\top (x-x^\prime)).
\end{align*}
Combining the two previous results gives
\begin{align*}
    \mathbb{E}[L^2] 
    & \le 4 \mathbb{E}\left[ \underset{x, x^\prime \in \mathcal{X}}{\sup} ||\frac{1}{D} \sum_{i=1}^D \omega_i \sin(\omega_i^\top (x-x^\prime)) ||^2 \right]\\
    & \le 4 \underset{\omega_1, \dots, \omega_D}{\mathbb{E}} \left[ \left( \frac{1}{D}\sum_{i=1}^{D} ||\omega_i|| \right)^2 \right]\\
    & \le 4 \underset{\omega_1, \dots, \omega_D}{\mathbb{E}} \left[ \frac{1}{D}\sum_{i=1}^{D} ||\omega_i||^2 \right] = 4 \underset{\omega}{\mathbb{E}}[||\omega||^2] = 4 \sigma_p^2,
\end{align*}
which follows from the triangle inequality, $|\sin(\cdot)|\le 1$, Jensen’s inequality and the fact that the $\omega_j$s are drawn i.i.d. derive the final expression. Thus, we can bound the probability of the first event via Markov’s inequality:
\begin{equation}
    \mathbb{P}\left[ L \ge \frac{\epsilon}{2r} \right] \le \left( \frac{4r\sigma_p}{\epsilon} \right)^2.
    \label{eq:prop-2-part-2}
\end{equation}
To bound the probability of the second event, note that, by definition, $S(\delta)$ is a sum of $D$ i.i.d. variables, each bounded in absolute value by $\frac{2}{D}$ (since, for all $x$ and $x^\prime$, we have $|\nu(x, x^\prime)| \le 1$ and $|\Phi(x)^\top \Phi(x^\prime)| \le 1$), and $\mathbb{E}[S(\delta)] = 0$. Thus, by Hoeffding’s inequality and the union bound, we can write
\begin{equation}
    \mathbb{P}\left[\exists j \in [B]: |S(\delta_j)|\ge \frac{\epsilon}{2}\right] \le \sum_{j=1}^{B} \mathbb{P}\left[|S(\delta_j)|\ge \frac{\epsilon}{2}\right] \le 2 B \exp\left( -\frac{D\epsilon^2}{8} \right).
    \label{eq:prop-2-part-3}
\end{equation}
Combining \eqref{eq:prop-2-part-1}, \eqref{eq:prop-2-part-2}, \eqref{eq:prop-2-part-3}, and the definition of $B$ we have
\[
    \mathbb{P}\left[\underset{\delta \in \Delta}{\sup}|S(\delta_j)|\ge \epsilon\right] \le 2 \mathcal{N}(2 R, r) \exp\left\{ - \frac{D \epsilon^2}{8} \right\} + \left(\frac{4r\sigma_p}{\epsilon}\right)^2.
\]
\end{proof}

As we can see now, a key factor in the bound of the proposition is the covering number $N(2R,r)$, which strongly depends on the dimension of the space $N$. In the following proof, we make this dependency explicit for one especially simple case, although similar arguments hold for more general scenarios as well.

\begin{lemma}
\label{lemma:prop-2-lemma-2}
Let $\mathcal{X} \subset \mathbb{R}^d$ be a compact and let $R$ denote the radius of the smallest enclosing ball. Then, the following inequality holds:
\[
    \mathcal{N}(R, r) \le \left( \frac{3R}{r} \right)^d.
\]
\end{lemma}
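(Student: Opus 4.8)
The plan is to bound the covering number by a packing number and then close the argument with a volume comparison. First I would invoke the standard duality between covering and packing: let $\{c_1, \dots, c_N\}$ be a maximal $r$-separated subset of the ball $B(0, R)$, meaning $\|c_i - c_j\| \ge r$ for all $i \ne j$ and the set cannot be enlarged while preserving this property. By maximality, every point $y \in B(0,R)$ must satisfy $\|y - c_i\| < r$ for some $i$ — otherwise $y$ could be adjoined to the set, contradicting maximality — so the balls of radius $r$ centered at the $c_i$ already cover $B(0,R)$. Hence $\mathcal{N}(R, r) \le N$, and it suffices to upper-bound the packing number $N$.

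Next I would run the volume argument. Since the $c_i$ are pairwise at distance at least $r$, the open balls $B(c_i, r/2)$ are pairwise disjoint; and since each $c_i \in B(0, R)$, each such ball is contained in $B(0, R + r/2)$. Comparing Lebesgue volumes and using that the volume of a $d$-dimensional ball of radius $\rho$ equals $V_d \rho^d$ for a dimension-dependent constant $V_d$, the common factor $V_d$ cancels and I obtain
\[
    N \left( \frac{r}{2} \right)^d \le \left( R + \frac{r}{2} \right)^d, \qquad \text{hence} \qquad N \le \left( \frac{2R + r}{r} \right)^d = \left( 1 + \frac{2R}{r} \right)^d.
\]

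Finally I would simplify the constant into the stated form. In the relevant regime $r \le R$ — which is exactly what is needed in the application, where Lemma~\ref{lemma:prop-2-lemma-1} calls this result with ball radius $2R$ and covering radius $r \le 2R$, i.e. $r$ at most the ball radius — one has $1 \le R/r$, so $1 + 2R/r \le 3R/r$, giving $\mathcal{N}(R, r) \le N \le (3R/r)^d$ as claimed. I do not expect a serious obstacle here; the only points requiring care are the packing-covering reduction (verifying that maximality of the $r$-separated set forces it to be an $r$-cover) and the containment $B(c_i, r/2) \subseteq B(0, R + r/2)$, both of which are elementary, while the constant $3$ is merely the slack afforded by the assumption $r \le R$.
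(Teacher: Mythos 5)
Your proof is correct, and it follows the same high-level strategy as the paper's — bound the covering number by a packing number via a maximality argument, then bound the packing by a volume comparison — but the execution differs in two ways worth recording. The paper packs balls of radius $r/3$ \emph{entirely inside} $B(0,R)$, so the volume comparison immediately yields the stated constant: at most $R^d/(r/3)^d = (3R/r)^d$ disjoint balls fit, and growing a maximal such packing to radius $r$ produces a cover, with no side condition on $r$ (beyond the implicit $r < 3R$, without which the statement is false anyway, since the right-hand side drops below $1$ while $\mathcal{N}(R,r)\ge 1$). You instead take a maximal $r$-separated point set, surround its points by disjoint balls of radius $r/2$, and compare volumes against the \emph{enlarged} ball $B(0, R+r/2)$; this gives the sharper intermediate bound $(1+2R/r)^d$, but it forces you to invoke $r \le R$ to reach the constant $3$ — indeed $1 + 2R/r \le 3R/r$ fails for $R < r$ (e.g.\ $r = 2R$ gives $2 > 3/2$), so strictly you prove the lemma on a narrower range of $r$ than the paper's argument covers. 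You correctly flag this and observe it suffices for the application, since Lemma~\ref{lemma:prop-2-lemma-1} only invokes $\mathcal{N}(2R, r)$ with $r$ at most the ball radius $2R$, so no harm is done downstream. In exchange, your variant is the cleaner one on a point the paper glosses over: in the paper's maximality step, a new radius-$r/3$ ball centered at an uncovered point near the boundary of $B(0,R)$ may protrude from $B(0,R)$, so "we would be able to fit another ball into the packing" is only literally correct after pulling the candidate center inward by $r/3$ — which is exactly why the paper needs the slack between the separation $2r/3$ and the claimed covering radius $r$. Your formulation with separated points and the containment $B(c_i, r/2) \subseteq B(0, R+r/2)$ sidesteps that boundary subtlety entirely.
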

\begin{proof}[Proof of Lemma \ref{lemma:prop-2-lemma-2}]
By using the volume of balls in $\mathbb R^d$, we already see that $R^d / (r/3)^d = (3R/r)^d$ is a trivial upper bound on the number of balls of radius $r/3$ that can be packed into a ball of radius $R$ without intersecting. Now, consider a maximal packing of at most $(3R/r)^d$ balls of radius $r/3$ into the ball of radius $R$. Every point in the ball of radius $R$ is at distance at most $r$ from the center of at least one of the packing balls. If this were not true, we would be able to fit another ball into the packing, thereby contradicting the assumption that it is a maximal packing. Thus, if we grow the radius of the at most $(3R/r)^d$ balls to $r$, they will then provide a (not necessarily minimal) cover of the ball of radius $R$.
\end{proof}

Finally, by combining the two previous lemmas, we can present an explicit finite sample approximation bound. We use lemma~\ref{lemma:prop-2-lemma-1} in conjunction with lemma~\ref{lemma:prop-2-lemma-2} with the following choice of $r$:
\[
    r = \left[ \frac{2(6R)^d \exp(- \frac{D\epsilon^2}{8})}{\left( \frac{4\sigma_p}{\epsilon} \right)^2} \right]^{\frac{2}{d+2}},
\]
which results in the following expression
\[
    \mathbb{P}\left[ \underset{\delta \in \Delta}{\sup} |S(\delta)| \ge \epsilon \right] \le 4 \left( \frac{24R\sigma_p}{\epsilon} \right)^{\frac{2d}{d+2}} \exp\left( -\frac{D\epsilon^2}{4(d+2)} \right).
\]
Since $32R \sigma_p/\epsilon \ge 1$, the exponent $2d / (d+2)$ can be replaced by 2, which completes the proof.

\section{Algorithm}
\label{append:learning}

% \section{Learning Algorithm for DAPP}
% \label{append:learning-algo}

\begin{algorithm}[!h]
\SetAlgoLined
    {\bfseries Input:} The data set $X = \{\boldsymbol{x}_j\}_{j=1,\dots,n}$ with $n$ samples, where each sample $\boldsymbol{x} = \{x_i\}_{i=1}^{N_T}$ is a series of events, $N_T$ is the number of events in the time horizon $T$\;
    Define the number of iterations $\eta$, the number of samples in a mini-batch $M$, and the number of random Fourier features $D$\;
    Initialize model parameters $\boldsymbol{\theta}_0 = \{W, b, \{\theta^{(k)}, W^{(k)}_u, W^{(k)}_v\}_{k=1,\dots,K}\}$; $l = 0$\; 
    \While{$l < \eta$}{
        Randomly draw $M$ sequences from $X$ denoted as $\widehat{X}_l = \{\boldsymbol{x}_j: \boldsymbol{x}_j \in \mathcal{X}\}_{j=1,\dots,M}$\;
        Generate $D$ Fourier features from $p_\omega$ denoted as $\widehat{\Omega}_l = \{\omega_k \coloneqq G(z;\theta), z \sim p_z \}_{k=1,\dots,D}$\;
        $\boldsymbol{\theta}_l \leftarrow$ Update $\boldsymbol{\theta}_l$ by maximizing \eqref{eq:log-likelihood} using stochastic gradient descent given $\widehat{X}_l, \widehat{\Omega}_l$\;
        $l \leftarrow l + 1$\;
    }
\caption{Learning for DAPP}
\label{alg:learning}
\end{algorithm}

% \section{Thinning Algorithm for DAPP}
% \label{append:thinning-algo}

\begin{algorithm}[!h]
\SetAlgoLined
  {\bfseries input} $\boldsymbol{\theta}, T, \mathcal{M}$\;
  {\bfseries output} A set of events $\mathcal{H}_t$ ordered by time.\;
  Initialize $\mathcal{H}_t = \emptyset$, $t=0$, $m \sim \texttt{uniform}(\mathcal{M})$\;
  \While{$t < T$}{
    Sample $u \sim \texttt{uniform}(0, 1)$; $m \sim \texttt{uniform}(\mathcal{M})$; $D \sim \texttt{uniform}(0, 1)$\;
    $x^\prime \leftarrow (t, m^\prime)$; $\bar{\lambda} \leftarrow \lambda(x^\prime | \boldsymbol{h}(x^\prime))$ given history $\mathcal{H}_t$\;
    % $\bar{\lambda} \leftarrow \mu + \sum_{(\tau, r) \in \alpha} \nu(t, \tau, s_n, r)$; $t \leftarrow t  - \ln u/\bar{\lambda}$\;
    % Compute $\lambda^*_{\theta}(t, m)$ from \eqref{eq:lambda}\;
    $t \leftarrow t  - \ln u/\bar{\lambda}$\;
    $x \leftarrow (t, m)$; $\widetilde{\lambda} \leftarrow \lambda(x | \boldsymbol{h}(x))$ given history $\mathcal{H}_t$\;
    \If{$D \bar{\lambda} > \widetilde\lambda$}{
      $\mathcal{H}_t \leftarrow \mathcal{H}_t \cup \{(t, m)\}$; $m^\prime \leftarrow m$\;
    }
  }
\caption{Efficient thinning algorithm for DAPP}
\label{alg:thinning}
\end{algorithm}

% \section{Events Selection for Online Attention Point Process}
% \label{append:event-selection}

% \begin{wrapfigure}{r}{0.52\textwidth}
% \vspace{-.1in}
\begin{algorithm}[!h]
\SetAlgoLined
    {\bfseries Input:} data $\boldsymbol{x} = \{x_i\}_{i=1}^\infty$, threshold $\eta$\;
    %  \STATE{Initialize $i=0, \mathscr{A}_{i,l} = \emptyset$.}
    Initialize $\mathscr{A}_{0}^{(k)} = \emptyset, k=1,\dots, K$\;
    \For{$i=1$ {\bfseries to} $+\infty$.}{
        \For{$k=1$ {\bfseries to} $K$.}{
            $\mathscr{A}_{i}^{(k)} \leftarrow \mathscr{A}_{i-1}^{(k)} \cup \{x_i\}$\;
            Initialize $\mathscr{S}_{i}^{(k)} = \emptyset$, $\bar\nu_{j}^{(k)} = 0$\;
            \For{$j=1$ {\bfseries to} $i-1$}{
                $\mathscr{S}_{j}^{(k)} \leftarrow \mathscr{S}_{j}^{(k)} \cup \widetilde\nu^{(k)}(x_i, x_j)$\;
                $\bar\nu_{j}^{(k)} \leftarrow (\sum_{s \in \mathscr{S}_{j}^{(k)}} s) / |\mathscr{S}_{j}^{(k)}|$\;
            }
            \If{$i > \eta$}{
                $\mathscr{A}_{i}^{(k)} \leftarrow \mathscr{A}_{i-1}^{(k)} \setminus \underset{x_j: t_j < t_i}{\arg \min} \left\{ \bar\nu_{j}^{(k)} \right\}$\;
            }
        }
    }
\caption{Event selection for online attention}
\label{alg:online-attention}
\end{algorithm}

% \end{document}

\end{document}